\newcommand{\numitems}{N}
\newcommand{\R}{\mathbb{R}}
\newcommand{\E}{\mathbb{E}}
\newcommand{\latentdim}{d}
\newcommand{\diag}{\mathrm{diag}}
\newcommand{\sgn}{\mathrm{sgn}}
\newtheorem{thm}{Theorem}
\newtheorem{coro}[thm]{Corollary}
\newtheorem{prop}[thm]{Proposition}
\newtheorem{remark}{Remark}
\theoremstyle{definition}
\theoremstyle{definition}
\newtheorem{definition}{Definition}
\newcommand{\timeoffset}{s}
\newcommand{\affinity}{\mathsf{a}}
\newcommand{\stationary}{\mathsf{s}} \usepackage{xcolor}
\newtheorem{lemma}[thm]{Lemma}
\title{Preference Dynamics Under Personalized Recommendations}
\author{Sarah Dean$^1$ and Jamie Morgenstern$^{2,3}$\\
$^1$Cornell University~~~~$^2$The University of Washington~~~~$^3$Google
}
\begin{document}

\maketitle

\begin{abstract}
The design of content recommendation systems underpins many online platforms: social media feeds, online news aggregators, and audio/video hosting websites all choose how best to organize an enormous amount of content for users to consume. Many projects (both practical and academic) have designed algorithms to match users to content they will enjoy under the assumption that user's preferences and opinions do not change with the content they see. 

However, increasing amounts of evidence suggest that individuals' preferences are directly shaped by what content they see---radicalization, rabbit holes, polarization, and boredom are all example phenomena of preferences affected by content. Polarization in particular can occur even in ecosystems with ``mass media,'' where no personalization takes place, as recently explored in a natural model of preference dynamics by~\citet{hkazla2019geometric} and~\citet{gaitonde2021polarization}. If all users' preferences are drawn towards content they already like, or are repelled from content they already dislike, uniform consumption of media leads to a population of heterogeneous preferences converging towards only two poles. 

In this work, we explore whether some phenomenon akin to polarization occurs when users receive \emph{personalized} content recommendations. We use a similar model of preference dynamics, where an individual's preferences move towards content the consume and enjoy, and away from content they consume and dislike. We show that standard user reward maximization is an almost trivial goal in such an environment (a large class of simple algorithms will achieve only constant regret). A more interesting objective, then, is to understand under what conditions a recommendation algorithm can ensure stationarity of user's preferences. We show how to design a content recommendations which can achieve approximate stationarity, under mild conditions on the set of available content, when a user's preferences are known, and how one can learn enough about a user's preferences to implement such a strategy even when user preferences are initially unknown.

\end{abstract}

 \section{Introduction}
Content recommendation pervades the day-to-day experience of nearly every person with a smartphone.
An increasing proportion of the news we consume, the advertisements we see, the entertainment we watch, and the products we buy are served to us by online platforms. Some of these systems provide the same information and recommendations to all viewers (e.g., many newspapers have a conventional ``front page" on their homepage), while others make personalized suggestions, each of which depend on information a platform has about their users. 

Personalized recommendations may have several objectives: they may aim to serve individuals with content they will like best (user utility) or to increase the amount of time or money an individual will spend on the platform (on-time). Both personalized and non-personalized content may be designed for these or other goals, and in the case of news, opinion pieces, and advertisements, may aim to  inform or persuade individuals of something (persuasion). 
The latter persuasion objective explicitly accounts for individuals' opinions or preferences changing due to the content they see, while user utility and on-time objectives generally encode preferences as fixed. However, even if these objectives capture the platform's goals, mounting evidence suggests that individuals' preferences are not static, but change as they interact with platforms. Examples of phenomena which support this hypothesis include polarization of opinions; individuals entering into rabbit holes, or being radicalized by online content; and boredom with content which is too similar to previous content.

Motivated by these effects of content providers on user beliefs or preferences, recent work has aimed to understand in what contexts non-personalized content recommendation can lead to polarization~\citep{hkazla2019geometric,gaitonde2021polarization}. The models capture \emph{biased assimilation}, assuming individuals whose beliefs initially align with content they observe will update their beliefs to align more closely with that content, while individuals who disagree with content they see will disagree more strongly after seeing that content.
This work finds, in a variety of conditions, a set of individuals who consume the same content will be drawn to one of two poles, regardless of their initial preferences or opinions. 
As such, non-personalized content platforms can drive a group towards polarized opinions even if their initial opinions were not polarized.

In this work, we aim to understand preference dynamics in personalized content recommendation systems, 
where different individuals are exposed to different content.
Understanding the consequences of preference changes is crucial both for effectively optimizing any long-term objective and for designing systems whose trajectory can be controlled to guarantee certain external desiderata. 
We study the extent to which personalized recommendations lead to analogues of polarization, how these dynamics interact with the goal of maximizing user utility, and perhaps more importantly how to make recommendations which avoid changing or manipulating the preferences of individuals. 
\subsection{Our Contributions}
In this work, we introduce a model of preference dynamics for personalized recommendation with biased assimilation. 
We study both the preference trajectories induced by recommendations and the performance of recommendation policies, as measured by regret.
For this model, we show
\begin{itemize}

\item {\bf The problem of regret minimization is nearly trivial for a single user}: one need only find the hemisphere in which that individual's initial preferences reside, then repeatedly recommend any piece of content in that hemisphere. This leads to constant regret.
\item Consequently, a more technically interesting question is {\bf whether one can design a recommendation system which does not cause preferences of individuals to change too much.} This also has practical importance: can we serve personalized content to users that they will enjoy, and for which we are not (intentionally or unintentionally) swaying their preferences? 
\item Given knowledge of an individual's initial preferences, {\bf we develop an algorithm that computes a distribution over content which leaves the preferences approximately unchanged}
so long as the available content set $Q$ satisfies a technical notion of richness. 
We show that this algorithm achieves regret logarithmic in the time horizon when performance is defined in terms of closeness to the individual's initial preferences.
\item
{\bf We present conditions under which an individual's initial preferences can be identified from observed reactions to recommendations.} This extension is both of practical and technical interest---most content recommendations learn about user's preferences based on their interaction with content. Furthermore, the identification problem must contend with changing preferences, which are caused by the the very act of observation (i.e., recommending a piece of content).

\end{itemize}

\section{Related Work}

\subsection{Opinion and Preference Dynamics}
The model of biased assimilation that we study was first proposed by~\citet{hkazla2019geometric} in the context of opinion polarization.
In contrast to our work, they focus on a non-personalized setting, where every individual is exposed to the same content at each time step.
They show that even when content is chosen uniformly at random, the biased assimilation dynamics result in a polarized population almost surely on an infinite horizon.
\citet{gaitonde2021polarization} consider a broader class of opinion dynamics in this non-personalized setting and develop a more nuanced analysis of the polarization phenomena on infinite time horizons.

There is also a rich body of work in economics which models how preferences and opinions change.
Often, the updates are implicitly defined by a maximization, resulting in a fixed-point update rule instead of than the iterative dynamics we study here.
Rather than attempt a complete summary, we refer to recent work by~\citet{bernheim2021theory} and citations therein.

\subsection{Dynamics in Personalized Recommendations}
Nonstationarity in recommendation systems (and online communities more generally) has received both theoretical and empirical study. 
Theoretical models of the feedback between recommendations and preferences have studied in the context of 
echo chambers and filter bubbles~\citep{jiang2019degenerate,kalimeris2021preference}.
Within a social network, models of opinion aggregation can lead to polarization~\cite{dandekar2013polarization}.
These phenomena are examples of how beliefs might be made more extreme within a recommendation system or social network due to positive feedback loops.

Evidence also suggests 
that users receiving highly personalized (or highly stationary) recommendations bore quickly of the highly similar content~\citep{wang2003personalization}, indicating some degree of preference shift as a function of previous recommendations. 
In contrast to the model we study in this work, the dynamics of boredom induce a negative feedback loop.
Some work has attempted to develop recommendation systems which reduce the negative performance effects of boredom~\citep{warlop2018fighting,leqi2021rebounding}.

There is also a large body of work
studying the dynamical effects of personalized recommendations in simulation, where more complex models of behavior can be investigated. 
If a recommender does not account for its sampling bias, this can lead to feedback loops and inconsistent naive estimators~\citep{schmit2018interaction}.  
When users are modelled as having limited knowledge of their utility, this can result in homogenization of recommendations~\citep{chaney2018algorithmic}. 
Further negative effects from feedback loops include popularity bias, echo chambers, and filter bubbles ~\cite{mansoury2020feedback,jiang2019degenerate}.
In response to this variety of simulation studies, several simulation frameworks have been proposed, variously focusing on the societal consequences of recommendation systems~\citep{lucherini2021t}, disentangling the effects of recommendations vs. user behaviors
\cite{yao2021measuring}, and methodologies for performance evaluation
\cite{krauth2020offline}.

\subsection{Machine Learning and Manipulation}

The issue of feedback between learned models and the populations on which they are trained 
has received recent attention
under the name \emph{performative prediction}, introduced by
\citet{perdomo2020performative}.
This framework 
defines an abstraction capturing user reaction via decision-dependent distribution shifts.
Much of this literature focuses on a stateless setting, with shifts occuring in a memoryless manner, through recent extensions consider stateful distributions \citep{ray2022decision,brown2022performative}. 
The definition of performative distribution shift
abstracts  away
the relationship between predictions and actions.
In contrast, our work explicitly considers how different strategies for selecting recommendations based on estimated preferences lead to different behaviors.

Reinforcement learning (RL) explicitly contends with the challenges of choosing actions in dynamical environments. It has been used in the setting of recommendation; for a broad survey on this approach see~\citet{afsar2021reinforcement}. A reinforcement learning problem is defined by a set of states $S$, actions $A$, and a reward function $r : A \times S \to \mathbb{R}$ and transition function $g : A \times S \to \Delta^S$, defining the reward and distribution over states a system will receive when starting in state $s$ and taking action $a$. To encode a recommendation system into this framework, the state $s$ is identified with the current user and their history of interaction with the system, actions $a$ as the item(s) recommended, the reward function as the feedback from the user about the recommendation, and the transitions as the effect of the recommendation on the state. 
In principle, RL algorithms can find high reward policies (rules for choosing an action based on the state)  without knowing the reward and transition functions a priori.
In the setting of recommendation systems, RL could contend with a variety of dynamics around preference shifts and boredom~\citep{ie2019reinforcement,chen2019top},
even handling the second order effects of recommendations on content creators~\citep{zhan2021towards}.

However, the use of RL for recommendation brings to bear questions of
value alignment: what do we really want to optimize~\cite{stray2021you}?
Furthermore, what are the implications of exploiting human behaviors toward the maximization of some reward signal?
A recent line of work criticizes RL-based recommendation algorithms which induce preference shifts, arguing that goals of non-manipulation should be explicitly accounted for in algorithm design~\cite{carroll2021estimating,krueger2020hidden,evans2021user}.
This perspective motivates our investigation into the objective of ``stationary preferences'' in the latter half of this paper.

\section{Setting}

We consider the effects of recommendations on the preferences of individuals. 
For consistency with the vocabulary of the recommendation systems literature, we will refer to the objects being recommended as ``items.''
An individual (sometimes called a ``user'') is represented by their preferences $p_t\in\R^d$ at time $t$. 
This preference vector could represent affinity towards various attributes of items.
We consider a finite collection of $N$ items
where each item $i$ is also represented by a vector $q_i\in\R^d$
and the set of items is denoted $ Q = \{q_1,\dots, q_N\}$.
The vector representation of items can correspond to the extent to which a particular item has various attributes. 
Then the overall affinity of an individual towards some item is modelled as the inner product of these vectors: $p_t^\top q_i$.
For example, if items are movies, then each dimension of $q$ could represent a genre, theme, or other relevant movie feature. 
Correspondingly, each dimension of $p$ represents the affinity of an individual towards movies with particular themes or genres.

Such inner product models are canonical in field of recommender systems, and by some measures remain state-of-the-art~\cite{rendle2019difficulty,dacrema2021troubling}. 
When an item represented by $q$ is consumed by an individual represented by $p$, the overall affinity $p^\top q$ is usually assumed to be observed through explicit actions like ratings or implicit behaviors like watchtime.
As such, we will sometimes refer to this inner product as the ``rating'' or ``affinity.''
In content based models, item features $q$ are assumed to be measured directly. Then the user preferences $p$ can be learned from data via linear regression.
In matrix factorization models, when neither item features nor user preferences are directly observed, ratings from a large number of users and items are used to learn a low rank model which recovers such vectors.
These methods assume that the latent representations are static and do not account for dynamic changes.

In this work, we consider dynamic updates to user preferences.
Inspired by related work on opinion polarization~\cite{hkazla2019geometric,gaitonde2021polarization}, we consider the
dynamics model capturing  notion of \emph{biased assimilation}:
when a piece of content $q_t\in Q$ is recommended at time $t$, the individuals preference vector updates as
\[\tilde p_{t+1} = p_t + \eta_t p_t^\top q_t \cdot q_t,\quad
p_t = \frac{\tilde p_t}{\|\tilde p_t\|_2}\]
for $t=0,1,\dots$ where $\eta_t$ is a \emph{step size} further discussed below.
For simplicity of exposition, we model preference and item vectors as unit norm, so that for all $t$, $p_t\in\mathcal S^{d-1}$ and $Q\subset \mathcal S^{d-1}$.
The direction of these vectors, rather than their norm, impacts affinity and dynamics.

Recalling that the inner product represents the affinity that an individual has towards an item, 
the connection to biased assimilation becomes clear.
If an individual likes an item ($p^\top q>0$), then their preference vector will move towards it proportional to their affinity. 
If an individual doesn't like an item ($p^\top q<0$), then their preference vector moves away.
Therefore, the affinity is more than a signal about the static preferences of an individual.
Rather, it is also a mechanism by which the preferences evolve.
\citet{carroll2021estimating} propose similar dynamics in the setting of recommendation for simulation experiments.

The step size $\eta_t$ controls the magnitude of the preference evolution.
We will consider two settings: fixed step size and decreasing step size.
In the fixed case, we have that $\eta_t=\eta$ for some constant $\eta>0$.
This is the setting studied by~\citet{gaitonde2021polarization,hkazla2019geometric}.
In the decreasing case, we have that $\eta_t=\frac{\eta}{t+\timeoffset}$ for an integer value of $\eta$ and $\timeoffset$.
This setting encodes the idea that the sensitivity of individuals to recommendations decreases over time.
The particular $\frac{1}{t}$ scaling is related to averaging over consumed items.

In this work we take the perspective of a recommendation algorithm designer, and investigate methods for selecting some item $q_t$ at each time step $t$.
We focus on achieving high performance for two distinct notions of performance.
For the first, the reward at each time is given by
\[r_t^\affinity = p_t^\top q_t\]
i.e. the affinity of the individual for the recommended item.
This is a classical objective for recommendations, corresponding to maximizing predicted rating or watchtime. We refer to this setting as \emph{affinity maximization}.
For the second notion, the reward is instead given by the alignment of the individual's current vector with their initial preferences:
\[r_t^\stationary = p_t^\top p_0\:.\]
This encodes the importance of \emph{stationary preferences} or non-manipulation. 
In both cases, we will present recommendation algorithms which recieve high reward, as compared with the maximum possible reward $r_\star = 1$ (which would, in both cases, be achieved by $q_t=p_0$ for all $t$).
We  evaluate the performance of policies by their regret:
\[R(T) = \sum_{t=0}^{T-1} r_\star - r_t\]
We further analyze convergence properties of the preference trajectories under such policies.
In what follows, we assume that the set of items to be recommended $Q$ and their representations $q_1,\dots,q_N$ are known and fixed. 
For the individual's preference vector, we consider both
the full observation case where $p_t$ is observed directly and the partial observation case where only the affinities $y_t = q_t^\top p_t$, possibly with noise, can be observed.

\section{Rating Maximization and Convergence}\label{sec:conv}

The dynamics we have introduced posit that an individual's preferences move towards recommended items that they have positive affinity for.
This is a self-reinforcing feedback loop--an individual's affinity for the recommended item only increases after consumption. For this reason,  rating maximization seems almost trivial 
to optimize. 
In this section, we show this is indeed the case.
In particular, any policy that recommends any one item 
in the same hemisphere as the individual's preferences 
a $1-\rho$ fraction of rounds will incur at most $\rho$ regret. 

\begin{figure}
    \centering
    \includegraphics[width=0.49\textwidth]{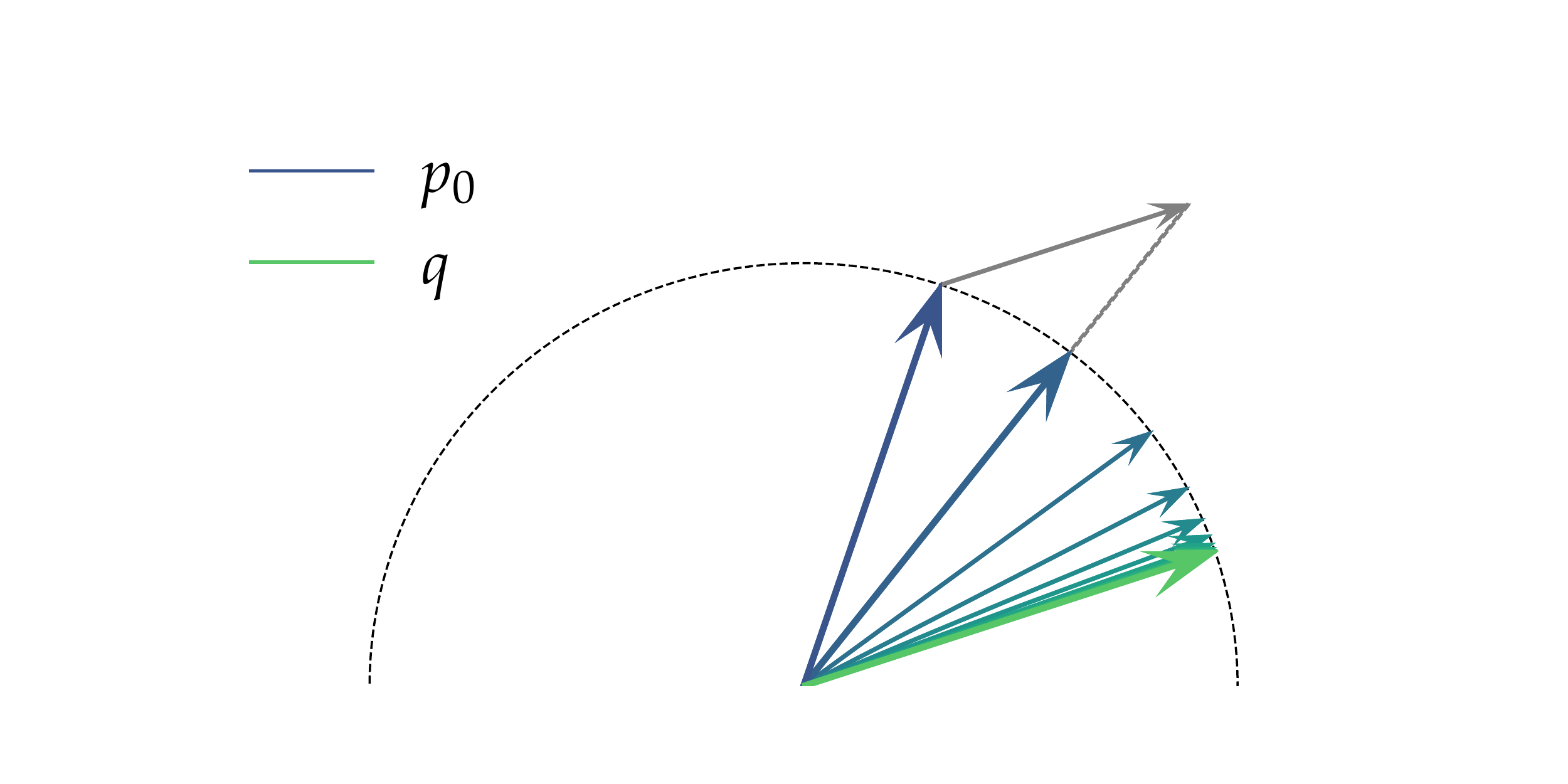}~\includegraphics[width=0.49\textwidth]{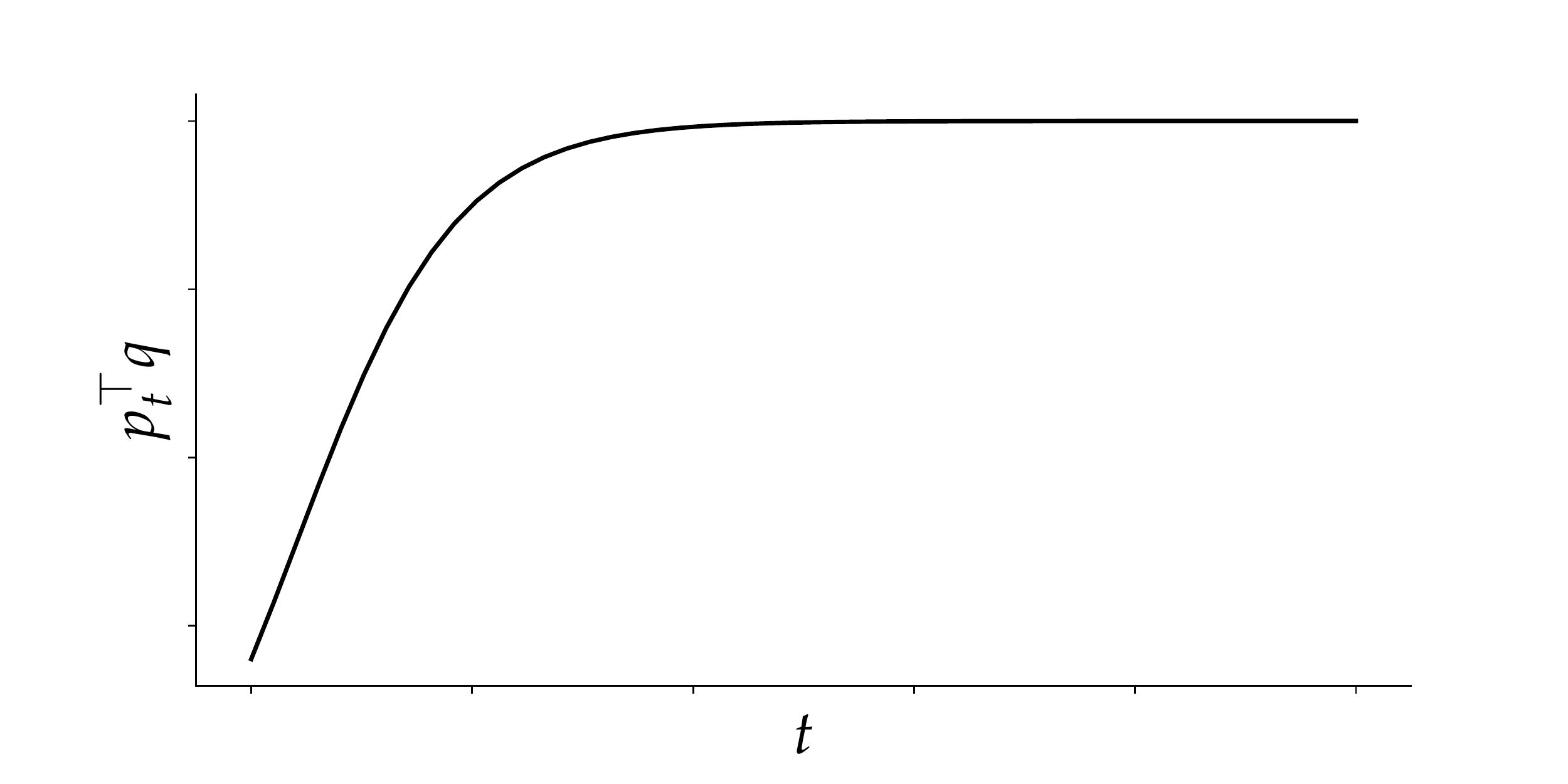}
    \caption{Left: an illustration of preference evolution from $p_0$ in blue towards the fixed recommendation $q$ in green. The first update $\eta\cdot p_0^\top q\cdot q$ is shown in grey. Right: a plot of the affinity over time.}
    \label{fig:fixed}
\end{figure}
\subsection{Convergence of Fixed Recommendations}

Consider the fixed recommendations $q_t = q$ for some fixed $q\in Q$ and all $t\geq 0$.
The following proposition characterizes the preference trajectory under this policy, showing that
$p_t$ converges to either $q$ or $-q$ depending on the sign of $q^\top p_0$.
The convergence rate is stated in terms of $(p_t^\top q)^2$, which can be understood as the squared cosine of the angle between $p_t$ and $q$.

\begin{prop}\label{prop:convergence}
If $q_t=q$ for all $t\geq 0$, then user preference $p_t$ converges to $q$ if $p_0^\top q>0$ and $-q$ if $p_0^\top q<0$.
Furthermore, the trajectory $(p_t)_{t\geq 0}$ lies in the cone defined by $p_0$ and $\pm q$.
In particular, trajectory is characterized by:
\begin{align} \label{eq:convergence}
    (p_t^\top q)^{-2}= 1+ \gamma_t^2  \left(( p_0^\top q)^{-2}- 1\right) ,\quad \gamma_t^2= \begin{cases}  (\eta  + 1)^{-t}& \eta_t=\eta\\ \prod_{k=0}^{\eta-1}
    \frac{\timeoffset+k}{t+\timeoffset+k}
     & \eta_t= \frac{\eta}{t+\timeoffset}\end{cases}
\end{align}
\end{prop}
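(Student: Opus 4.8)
The plan is to collapse the $d$-dimensional dynamics onto a single scalar recursion in the affinity $a_t := p_t^\top q$ and then read off every claim from that scalar. First I would observe that each update adds a multiple of $q$ to $p_t$ before renormalizing, so an immediate induction shows $p_t \in V := \mathrm{span}\{p_0,q\}$ for all $t$; it therefore suffices to work inside the (at most two-dimensional) plane $V$. Away from the degenerate cases $p_0 = \pm q$, I would coordinatize $V$ by $q = (1,0)$ and $p_0 = (\cos\theta_0,\sin\theta_0)$, so that $a_t = \cos\theta_t$ and the quantity tracked by \eqref{eq:convergence} is $(p_t^\top q)^{-2}-1 = \sec^2\theta_t - 1 = \tan^2\theta_t$.

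Next I would derive the one-step recursion. Dotting $\tilde p_{t+1} = p_t + \eta_t a_t q$ with $q$ gives $\tilde p_{t+1}^\top q = (1+\eta_t)a_t$, and, using that $p_t$ and $q$ are unit vectors, $\|\tilde p_{t+1}\|_2^2 = 1 + (2\eta_t+\eta_t^2)a_t^2$. Hence
\[a_{t+1}^2 = \frac{(1+\eta_t)^2 a_t^2}{1 + (2\eta_t+\eta_t^2)a_t^2}.\]
Taking reciprocals and subtracting one, the additive terms collapse (since $2\eta_t+\eta_t^2 - (1+\eta_t)^2 = -1$) and I obtain the purely \emph{linear} recursion $c_{t+1} = (1+\eta_t)^{-2}\,c_t$ for $c_t := (p_t^\top q)^{-2}-1 = \tan^2\theta_t$. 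Telescoping gives $c_t = c_0\prod_{k=0}^{t-1}(1+\eta_k)^{-2}$, which is precisely the shape of \eqref{eq:convergence} with $\gamma_t^2 = \prod_{k=0}^{t-1}(1+\eta_k)^{-2}$. For the fixed step size this geometric product is $(1+\eta)^{-2t}$; for the decreasing step size, writing $1+\eta_k = \frac{k+\timeoffset+\eta}{k+\timeoffset}$ and telescoping the resulting ratio of rising factorials collapses (for integer $\eta$) to $\prod_{k=0}^{\eta-1}\frac{\timeoffset+k}{t+\timeoffset+k}$, as all but the $\eta$ smallest numerator and $\eta$ largest denominator terms cancel. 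I note that this calculation leaves a square on each factor, so the rates I derive are $(1+\eta)^{-2t}$ and the square of the displayed telescoping product; I would re-examine whether the exponent in the displayed $\gamma_t^2$ should carry this extra factor of two.

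Convergence and the cone claim then fall out of the scalar picture. Since $\eta_k>0$, in both regimes $\gamma_t^2 \to 0$ (in the decreasing case each of the $\eta$ factors tends to $0$), so $c_t\to 0$ and $(p_t^\top q)^2\to 1$. The sign of $a_t$ is preserved because $a_{t+1} = (1+\eta_t)a_t/\|\tilde p_{t+1}\|_2$ carries the sign of $a_t$; together with $|a_t|\to 1$ and unit norms this forces $p_t\to q$ if $a_0>0$ and $p_t\to -q$ if $a_0<0$. For the cone property, note that in the $V$-coordinates the update changes only the $q$-coordinate, so the component of $p_t$ orthogonal to $q$ (its second coordinate) is merely rescaled by the positive factor $1/\|\tilde p_{t+1}\|_2$ and never flips sign; hence $\theta_t$ stays on one side throughout, and since $\tan^2\theta_t$ decreases monotonically to $0$ the angle moves monotonically from $\theta_0$ toward $0$ (resp.\ $\pi$), keeping $p_t$ in the cone of $p_0$ and $+q$ (resp.\ $-q$). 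The only delicate points I anticipate are getting the factorial telescoping exactly right in the decreasing case and making the sign/monotonicity argument airtight enough to rule out overshooting past $\pm q$.
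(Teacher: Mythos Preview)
Your approach is essentially identical to the paper's: both reduce to the scalar recursion $c_{t+1}=(1+\eta_t)^{-2}c_t$ for $c_t=(p_t^\top q)^{-2}-1$, telescope, and simplify the product in each step-size regime, with sign invariance and the span/cone argument handled the same way. Your suspicion about the extra factor of two is well-founded: the paper's own derivation also produces $\gamma_t^2=(1+\eta)^{-2t}$ and $\gamma_t^2=\bigl(\prod_{k=0}^{\eta-1}\frac{\timeoffset+k}{t+\timeoffset+k}\bigr)^2$, so the displayed formula in the statement appears to be missing that square.
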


As $t\to\infty$,  $\gamma_t$ tends to $0$, leading to $(p_t^\top q)^2$ tending to $1$.
For both step sizes settings, the convergence is fast: exponential or inverse squared. 
For the decreasing step size with $\eta=1$ and $s=1$, the expression simplifies to $\gamma_t^2=(t+1)^{-2}$.
Figure~\ref{fig:fixed} illustrates the fast convergence for fixed stepsize.

\begin{proof}
We establish properties of the dynamics update.
For simplicity of notation, we drop the time index and write the next state $p_+$ in terms of $p$ under input $q$ and step size $\eta$.
We first observe that since 
\[p_+ \propto p + \eta q^\top p q,\]
the updated preference $p_+$ always lies in the span of $p$ and $q$.
Therefore, $p_t$ lies in the span of $p_0$ and $q$ for all $t$.
We next argue that the sign of $p_t^\top q$ is invariant under the update.
\[\sgn(p_+^\top q) = \sgn(\tilde p_+^\top q) = \sgn(p^\top q + \eta p^\top q q^\top q) = \sgn((1+\eta)p^\top q) = \sgn(p^\top q)\:.\]
Since this holds for every update, the sign of $p_t^\top q$ is the same as $p_0^\top q$ for all $t$.
Therefore, not only is $p_+$ in the span of $p$ and $q$; it is specifically a positive combination of $p$ and $q$ (or $-q$).
As a result, $p_t$ lies in the cone defined by $p_0$ and $\sgn(p_0^\top q)\cdot q$ for all $t$.

Finally, we bound the convergence rate.
Note that $(p_+^\top q)^2 = \frac{(\tilde p_+^\top q)^2}{\|\tilde p_+\|_2^2}$.
Starting with the denominator,
\begin{align*}
    \|\tilde p_+\|_2^2 &= \|p\|_2^2 + 2\eta(p^\top q)^2 + \eta^2(p^\top q)^2\|q\|_2^2
    =(1+ \eta)^2 ( p^\top q)^2 - ( p^\top q)^2 + 1.
\end{align*}

A straightforward computation shows that
\begin{align*}
 (p_{+}^\top q)^{-2} -1 &=
    \frac{ 1}{ (\eta  + 1)^2} \left(( p^\top q)^{-2}- 1\right).
\end{align*}

This recursion
allows us to precisely characterize the trajectory of $p_t$.
In particular, we have that
\begin{align*}
 (p_{t}^\top q)^{-2} -1 &=
    \prod_{k=0}^{t-1} (\eta_k + 1)^{-2} \left(( p_0^\top q)^{-2}- 1\right).
\end{align*}
All that remains is simplifying the product for the two cases of step size.
For constant $\eta_t=\eta$, the expression simplifies to $(\eta  + 1)^{-2t}$,
while for decreasing $\eta_t = \frac{\eta}{t+\timeoffset}$
\begin{align*}
 \prod_{k=0}^{t-1} \left(\frac{\eta}{k+\timeoffset} + 1\right)^{-2}=
    \prod_{k=0}^{t-1}\left(\frac{k+\timeoffset} { \eta + k+\timeoffset}\right)^2
    =\prod_{k=0}^{\eta-1}\left(
    \frac{\timeoffset+k}{t+\timeoffset+k}
    \right)^2
\end{align*}
where the final equality holds because $\eta$ and $s$ are integers.
\end{proof}

We can even precisely describe the exact location of preferences under fixed recommendations, which we do below.
\begin{coro}\label{lem:alpha_beta}
For the fixed recommendation setting described in Proposition~\ref{prop:convergence},
\[p_t =\left( \frac{{\gamma_t} }{\sqrt{(q^\top p_0)^2+\gamma_t^2(1-(q^\top p_0)^2)}} \right) p_0 + \left( \frac{q^\top p_0(1-{\gamma_t}) }{\sqrt{(q^\top p_0)^2+\gamma_t^2(1-(q^\top p_0)^2)}} \right) q\:.\]
The magnitude of the first term is decreasing in $t$ and the second is increasing.
\end{coro}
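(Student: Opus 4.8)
The plan is to exploit the structural facts already furnished by Proposition~\ref{prop:convergence}: the iterate $p_t$ lies in the two-dimensional span of $p_0$ and $q$, and in fact in the cone generated by $p_0$ and $\sgn(p_0^\top q)\,q$. Writing $c = p_0^\top q$ and assuming the nondegenerate case $0 < |c| < 1$ (if $c=0$ or $|c|=1$ the preferences never move and the claim is vacuous), I would posit the ansatz $p_t = \alpha_t p_0 + \beta_t q$ and solve for the two scalars $\alpha_t,\beta_t$ using exactly two constraints: that $p_t$ is a unit vector, and that its inner product with $q$ takes the value pinned down by Equation~\eqref{eq:convergence}.

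First I would record the inner product. Since $\|q\|_2 = 1$, taking the inner product of the ansatz with $q$ gives $p_t^\top q = \alpha_t c + \beta_t$, while Proposition~\ref{prop:convergence} fixes both the magnitude and sign of $p_t^\top q$: solving Equation~\eqref{eq:convergence} and using $\sgn(p_t^\top q) = \sgn(c)$ yields $p_t^\top q = c\big/\sqrt{c^2 + \gamma_t^2(1-c^2)}$. This is the linear constraint $\alpha_t c + \beta_t = p_t^\top q$. The second constraint is the norm identity $\|p_t\|_2^2 = \alpha_t^2 + 2\alpha_t\beta_t c + \beta_t^2 = 1$, again using $\|p_0\|_2 = \|q\|_2 = 1$.

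Then I would eliminate $\beta_t = p_t^\top q - \alpha_t c$ from the quadratic. I expect the cross terms in $\alpha_t\beta_t$ to cancel, collapsing the quadratic to the clean relation $\alpha_t^2(1 - c^2) = 1 - (p_t^\top q)^2$. Substituting $1 - (p_t^\top q)^2 = \gamma_t^2(1-c^2)\big/\big(c^2+\gamma_t^2(1-c^2)\big)$ gives $\alpha_t^2 = \gamma_t^2\big/\big(c^2 + \gamma_t^2(1-c^2)\big)$, and back-substitution gives $\beta_t = c(1-\gamma_t)\big/\sqrt{c^2+\gamma_t^2(1-c^2)}$, matching the stated expressions. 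The quadratic admits two roots for $\alpha_t$; I would select the positive one, which is exactly what the cone membership from Proposition~\ref{prop:convergence} dictates (and which is consistent with $\gamma_t > 0$ and $\gamma_0 = 1$ giving $\alpha_0=1,\beta_0=0$).

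For the monotonicity, I would note that the magnitudes of the two terms are $|\alpha_t|$ and $|\beta_t|$ (since $\|p_0\|_2 = \|q\|_2 = 1$), and that in both step-size regimes $\gamma_t$ is strictly decreasing in $t$ (directly from the closed forms in Equation~\eqref{eq:convergence}). Writing $\alpha_t^2 = 1\big/\big(c^2\gamma_t^{-2} + (1-c^2)\big)$ shows it increases with $\gamma_t$, hence $|\alpha_t|$ decreases in $t$; while $|\beta_t|$ has numerator $|c|(1-\gamma_t)$ increasing and denominator $\sqrt{c^2+\gamma_t^2(1-c^2)}$ decreasing as $\gamma_t$ falls, so $|\beta_t|$ increases in $t$. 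The main subtlety to get right is the root/sign selection for $\alpha_t$ and the exclusion of the degenerate configurations $c\in\{0,\pm1\}$; once the cone constraint is invoked to fix the sign, the remainder is a direct computation.
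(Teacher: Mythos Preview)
Your proposal is correct and follows essentially the same approach as the paper: write $p_t=\alpha p_0+\beta q$, use the two constraints $q^\top p_t=\alpha c+\beta$ and $\|p_t\|_2^2=1$ to solve for $\alpha$ and $\beta$, then invoke the cone membership from Proposition~\ref{prop:convergence} to select the positive root. The only cosmetic difference is in the monotonicity step: the paper packages the increasing/decreasing claims into a separate derivative lemma (Lemma~\ref{lem:inc_dec}), whereas your direct rewriting $\alpha_t^2=1/\bigl(c^2\gamma_t^{-2}+(1-c^2)\bigr)$ and numerator/denominator argument for $|\beta_t|$ accomplishes the same thing without computing derivatives.
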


\subsection{Performance of Fixed Recommendations}

We now show that the characterizations in the prior section lead to vanishing regret in the objective function of affinity of the  user to recommended items,
\[r_t^\affinity = q_t^\top p_t\:.\]
Though the ideal recommendation at any time $t$ would be aligned with $p_t$ exactly, the fast convergence result presented in the previous section suggests that achieving high performance is easy.
It is necessary only to require that the sign of the inner product of the individual's preferences and the item's representation is positive.
The following proposition formalizes this result by showing constant regret.

\begin{prop}\label{prop:fixed_regret}
If $q_t=q$ for all $t\geq 0$ and $p_0^\top q>0$, then the cumulative regret is  at most constant:
\[R(T)\leq  C_\gamma(( p_0^\top q)^{-2}- 1),\qquad C_\gamma= \begin{cases}   \frac{(\eta^2+1)}{\eta^2 + 2\eta  }& \eta_t=\eta\\ \frac{s^2\pi^2}{6}  & \eta_t= \frac{\eta}{t+\timeoffset}\end{cases}\]
\end{prop}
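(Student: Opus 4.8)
The plan is to reduce the cumulative regret to a sum of the squared-cosine quantity that Proposition~\ref{prop:convergence} already controls. Since $r_\star=1$ and $r_t^\affinity = q^\top p_t$, the per-round regret is exactly $1 - p_t^\top q$; write $a_t := p_t^\top q$. Because $p_0^\top q>0$, the sign-invariance established in the proof of Proposition~\ref{prop:convergence} gives $a_t>0$ for all $t$, and the Cauchy--Schwarz inequality on unit vectors gives $a_t\le 1$, so $a_t\in(0,1]$ and $R(T)=\sum_{t=0}^{T-1}(1-a_t)$.

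The central step is an elementary inequality converting each per-round regret $1-a_t$ into the quantity $a_t^{-2}-1$ that the trajectory characterization tracks. I would show that $1-a \le a^{-2}-1$ for every $a\in(0,1]$, which follows from the factorization $a^{-2}-1-(1-a) = a^{-2}(1-a)(1+a-a^2)\ge 0$ on this range. Applying it with $a=a_t$ and substituting $(p_t^\top q)^{-2}-1 = \gamma_t^2\bigl((p_0^\top q)^{-2}-1\bigr)$ from \eqref{eq:convergence} gives $1-a_t \le \gamma_t^2\bigl((p_0^\top q)^{-2}-1\bigr)$. Summing and extending the range to infinity yields $R(T)\le \bigl((p_0^\top q)^{-2}-1\bigr)\sum_{t=0}^{\infty}\gamma_t^2$, reducing the problem to bounding $\sum_t \gamma_t^2$ in each step-size regime.

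For the fixed step size, $\gamma_t^2=(\eta+1)^{-2t}$ is geometric, so $\sum_{t=0}^{\infty}(\eta+1)^{-2t} = \bigl(1-(\eta+1)^{-2}\bigr)^{-1}$, which is a closed form in $\eta$ matching $C_\gamma$. The decreasing case is the more delicate part and where I expect the main effort: here $\gamma_t^2 = \prod_{k=0}^{\eta-1}\bigl(\tfrac{\timeoffset+k}{t+\timeoffset+k}\bigr)^2$, and since every factor is at most $1$ while the $k=0$ factor equals $\tfrac{\timeoffset}{t+\timeoffset}$, one bounds $\gamma_t^2 \le \tfrac{\timeoffset^2}{(t+\timeoffset)^2}$. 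Then $\sum_{t=0}^{\infty}\gamma_t^2 \le \timeoffset^2\sum_{m=\timeoffset}^{\infty} m^{-2} \le \timeoffset^2\sum_{m=1}^{\infty} m^{-2} = \tfrac{\timeoffset^2\pi^2}{6}$, the last equality being the Basel sum, which yields the stated $C_\gamma$.

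The only genuine obstacles are the two elementary estimates: finding the bound $1-a\le a^{-2}-1$ (the natural bridge between the regret and the quantity Proposition~\ref{prop:convergence} controls), and dominating the product $\gamma_t^2$ by the summable $p$-series $\timeoffset^2/(t+\timeoffset)^2$ so that recognizing the $\pi^2/6$ tail closes the decreasing case. Everything else is the geometric-series evaluation and the reduction in the first two paragraphs, which are routine once the per-round bound is in hand.
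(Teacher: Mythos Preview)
Your proof is correct and follows essentially the same route as the paper: bound each per-round regret $1-a_t$ by $a_t^{-2}-1$ (the paper does this via the chain $\tfrac{1-a_t^2}{a_t^2}\ge 1-a_t^2\ge 1-a_t$ rather than your factorization, but the content is identical), invoke the trajectory formula from Proposition~\ref{prop:convergence}, and then sum $\gamma_t^2$ as a geometric series in the constant case and via the Basel bound after dominating the product by $s^2/(t+s)^2$ in the decreasing case. Note that your geometric-series evaluation gives $\tfrac{(\eta+1)^2}{\eta^2+2\eta}$, which is exactly what the paper's proof obtains; the $\tfrac{\eta^2+1}{\eta^2+2\eta}$ in the statement is a typo.
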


Notice that the regret is smallest when $p_0^\top q$ is close to $1$.
This does support common practice to select the $q\in Q$ which has maximum inner product with $p_0$, i.e. which has largest predicted affinity. 
However, the regret of any $q$ in the same hemisphere of $p_0$ will only differ by a constant factor.
Therefore, it does not take much to achieve small regret.

This has a direct implications for questions of polarization in this model under performant personalized recommendations.
In particular, under fixed personalized recommendations $q^u$ for each user $u$, the preference vector for user $u$ will converge to $p^u_t \to q^u$.
Whether or not this represents a polarized population depends on how the $q_u$ are chosen.
If they are chosen such that
\[q^u = \arg\max_{q\in Q} q^\top p_t^u\]
then the steady-state population level distribution of preferences is determined by the initial distribution as well as the richness of the available items $Q$.
For example, if there are fewer items than users, the variance of the converged distribution will be less than that of the initial distribution. 
Within local neighborhoods around each $q\in Q$, individuals' preferences will collapse towards $q$, losing their initial richness and variation.
We term this phenomenon \emph{mode collapse}.
Where polarization is a global population level phenomenon, mode collapse is a local loss of preference variation.

\begin{proof}[Proof of Proposition~\ref{prop:fixed_regret}]
We begin by deriving a relationship between the quantity which converges in Proposition~\ref{prop:convergence} and the reward.
\begin{align*}
   (p_{t}^\top q)^{-2} -1 =
    \frac{1-r_t^2}{r_t^2}
   \geq {1-r_t^2}
   \geq  1-|r_t|
\end{align*}
where we make use of the fact that $|r_t|<1$.
By assumption, $p_0^\top q>0$, so by Proposition~\ref{prop:convergence}, $r_t>0$ for all $t$.
Therefore, as long as $r_t>0$, 
\begin{align*}
   R(T) \leq \sum_{t=0}^{T-1}  (p_{t}^\top q)^{-2} -1  \leq  \left(( p_0^\top q)^{-2}- 1\right) \sum_{t=0}^{T-1} \gamma_t^2
\end{align*}
A straightforward computation shows that under fixed step-size, the the geometric sum is upper bounded by $\frac{(\eta+1)^2 }{\eta^2 +2\eta }$.
For decreasing step size,
\begin{align*}
\sum_{t=0}^{T-1} \gamma_t^2
&=\sum_{t=0}^{T-1} 
\prod_{k=0}^{\eta-1}\left(
    \frac{\timeoffset+k}{t+\timeoffset+k}
    \right)^2\leq
    \sum_{t=0}^{T-1} 
    \left(
    \frac{\timeoffset}{t+\timeoffset}
    \right)^2
    \leq s^2  \sum_{t=1}^{\infty}    \frac{1}{t^2}  = \frac{s^2\pi^2}{6}
\end{align*}
where the first inequality holds because the terms in the product are less than one.
\end{proof}

\subsection{Hemisphere Identification}

The previous section highlights the fact that a trivially simple approach to recommendation is sufficient for achieving high performance.
However, it assumes that at least the hemisphere of $p_0$ is known a priori. 
Does the problem become much harder when nothing is known about $p_0$?
In this section, we answer in the negative.
We present a simple explore-then-commit algorithm which identifies the hemisphere of $p_0$  with high probability, even when the partial observations are noisy.
Algorithm~\ref{alg:explor_commit} returns a fixed recommendation $q$ and achieves regret that scales logarithmically in the horizon $T$. 
The only requirement is that the set $Q$ contains two items pointing in sufficiently different directions relative to $p_0$.

\begin{algorithm}
        \caption{Explore-then-Commit for Hemisphere Identification}\label{alg:explor_commit}
        \begin{algorithmic}[1]
\Procedure{ExploreCommit}{$q_1,q_2,T_e,T$} 
            \State select $q_e=q_i$ where $i\in\{1,2\}$ each with probability $1/2$,  initialize $S=0$
            \For{$0\leq t\leq T_e$}
                \State recommend $q_t=q_e$, observe $y_t$,  update $S\leftarrow S+ y_t$.
            \EndFor
            \If{$S\geq 0$} select $q=q_i$
            \Else{} select $q=q_j$ for $j\neq i$
            \EndIf
            \For{$T_e\leq t\leq T$}
                \State recommend $q_t=q$
            \EndFor
        \EndProcedure
        \end{algorithmic}
    \end{algorithm}

\begin{prop}\label{prop:exp_commit}
Suppose that observations of affinity are noisy, with $y_t = p_t^\top q_t +w_t$ for $w_t$ i.i.d. subgaussian with parameter $\sigma^2$. 
Further suppose that $q_1,q_2\in Q$ are such that $q_1^\top q_2 < 0$, $q_1^\top p_0 >0$, and $q_2^\top p_0<0$ or vice-versa and let $a=\min\{|p_0^\top q_1|, |p_0^\top q_2|, |q_1^\top q_2|\}$.

Then for $T_e=\sigma^2 \log(T)/a^{2}$,
Algorithm~\ref{alg:explor_commit} incurs expected regret bounded by
\[\E[R(T)] \leq 2 + C_\gamma/a^4 + \sigma^2 \log(T)/a^2 \]
where $C_\gamma$ is a step-size dependent constant defined in Proposition~\ref{prop:fixed_regret}.
\end{prop}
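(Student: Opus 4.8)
The plan is to condition on a high-probability event on which the algorithm commits to the unique item lying in the same hemisphere as $p_0$, and then to split the cumulative regret into three contributions: the exploration phase, the post-commitment phase, and the rare misidentification event. Write $q_1$ for the item with $q_1^\top p_0>0$ (so that $q_2^\top p_0<0$); this is the ``correct'' target, since committing to it places $p_t$ on a trajectory with positive affinity.

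First I would extract the structural facts from Proposition~\ref{prop:convergence}: under any fixed recommendation $q$ the sign of $p_t^\top q$ is preserved and $(p_t^\top q)^2$ is nondecreasing toward $1$. Hence during the $T_e$ exploration rounds the noisy observations obey $\E[y_t\mid q_e]=p_t^\top q_e$, a quantity whose sign is constant in $t$ and whose magnitude satisfies $|p_t^\top q_e|\ge|p_0^\top q_e|\ge a$; explicitly $p_t^\top q_1\ge a$ when $q_e=q_1$ and $p_t^\top q_2\le-a$ when $q_e=q_2$. Thus $S=\sum_{t=0}^{T_e}y_t$ is a sign-definite signal of magnitude at least $(T_e+1)a$ plus zero-mean subgaussian noise $\sum_t w_t$ of parameter $(T_e+1)\sigma^2$. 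A single subgaussian tail bound then shows that $\sgn(S)\neq\sgn(\E[S])$ forces $|\sum_t w_t|\ge(T_e+1)a$, an event of probability at most $2\exp(-c(T_e+1)a^2/\sigma^2)$; with $T_e=\sigma^2\log(T)/a^2$ this is $O(1/T)$. On the complementary good event the commit rule returns $q_1$ in both branches (keeping $q_1$ when $q_e=q_1$ and $S\ge0$, switching to $q_1$ when $q_e=q_2$ and $S<0$).

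With identification in hand I would bound the three regret pieces. The misidentification event contributes at most its $O(1/T)$ probability times the trivial horizon bound $2T$ (each round has $1-r_t\le2$), i.e.\ the constant term. The exploration phase only costs regret in the branch $q_e=q_2$, where recommending a disliked item costs at most $2$ per round for $T_e$ rounds; weighting by the probability $1/2$ of this branch gives the $\sigma^2\log(T)/a^2$ term. For the post-commitment phase on the good event, the recommendations are the fixed item $q_1$ started from the shifted state $p_{T_e}$, so Proposition~\ref{prop:fixed_regret} applies and bounds this regret by $C_\gamma\big((p_{T_e}^\top q_1)^{-2}-1\big)$; everything then hinges on a lower bound for $p_{T_e}^\top q_1$.

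This last lower bound is the crux, and it is exactly where the hypothesis $q_1^\top q_2<0$ enters. The danger is the branch $q_e=q_2$, in which exploration actively drives the preference toward $-q_2$, and I must show this does not spoil the eventual commitment to $q_1$. Using the explicit location of $p_{T_e}$ from Corollary~\ref{lem:alpha_beta} with $q=q_2$, the numerator of $p_{T_e}^\top q_1$ is the convex combination $\gamma_{T_e}(p_0^\top q_1)+(1-\gamma_{T_e})(q_2^\top p_0)(q_1^\top q_2)$; both $p_0^\top q_1\ge a$ and $(q_2^\top p_0)(q_1^\top q_2)=|q_2^\top p_0|\,|q_1^\top q_2|\ge a^2$ are positive, so the numerator is at least $a^2$, while the normalizing denominator is at most $1$ for $\gamma_{T_e}\in[0,1]$. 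Hence $p_{T_e}^\top q_1\ge a^2$, so $(p_{T_e}^\top q_1)^{-2}-1\le a^{-4}$ and the post-commitment regret is at most $C_\gamma/a^4$; the easier branch $q_e=q_1$ is the fixed recommendation of Proposition~\ref{prop:fixed_regret} from the start, with regret $C_\gamma/a^2\le C_\gamma/a^4$. Summing the three contributions yields the claimed bound $\E[R(T)]\le 2+C_\gamma/a^4+\sigma^2\log(T)/a^2$. The main obstacle throughout is this coupling between exploration and commitment: the signal driving identification and the state seeding the commit phase are both produced by the same moving preference vector, and controlling them simultaneously is what forces the use of the sign-preservation and cone structure of Proposition~\ref{prop:convergence}.
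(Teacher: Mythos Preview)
Your argument is correct and reaches the same bound, but by a somewhat different and more elementary route than the paper. The paper decomposes as $R(T)\le R_{q_1}(T)+R_e(T)$, comparing the algorithm to the fixed baseline $q_t=q_1$, and in the $q_e=q_2$ branch it controls the post-commitment gap $q_1^\top(\bar p_t-p_t)$ via a trajectory-difference lemma (Lemma~\ref{lem:convergence}) together with a constrained-minimization lemma (Lemma~\ref{lem:ellipse_constrained_opt}) which yields the sharper $p_{T_e}^\top q_1\ge \min\{q_1^\top p_0,-q_1^\top q_2\}\ge a$. You instead restart Proposition~\ref{prop:fixed_regret} at $p_{T_e}$ and bound $p_{T_e}^\top q_1$ directly from Corollary~\ref{lem:alpha_beta} as a convex combination, obtaining the weaker but sufficient $p_{T_e}^\top q_1\ge a^2$; this avoids both auxiliary lemmas entirely. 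The trade-off is that the paper's intermediate estimate is tighter (and its comparison-to-baseline framing packages the $q_e=q_1$ branch slightly more cleanly), while your approach needs no extra machinery. One small caveat worth making explicit when you write it up: in the decreasing-step-size case, restarting Proposition~\ref{prop:fixed_regret} at time $T_e$ corresponds to effective offset $s'=s+T_e$, so the relevant constant is $C_\gamma'=(s+T_e)^2\pi^2/6$; this introduces an extra polylog$(T)$ factor. The paper's time-offset application of Lemma~\ref{lem:convergence} has the same issue, so this is not a defect specific to your route.
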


We make a few comments on the assumptions and possible extensions.
Notice that the requirements $q_1^\top q_2 < 0$, $q_1^\top p_0 >0$, and $q_2^\top p_0<0$ (or vice-versa) are satisfied for every $p_0\in\mathcal S^{d-1}$ if and only if $q_1=-q_2$.
However, for any particular $p_0$, it need not be that an item and its negative are both contained within $Q$.
The dependence on the parameter $a$ which lower bounds the magnitude of these inner products can be understood as a ``gap'' dependence.
However, this gap is about the informativeness of our actions, rather than their reward.
It would be interesting to extend this algorithm to the case that such a $q_1$ and $q_2$ are not known a priori, and must be selected from $Q$ online.
We additionally note that an extension to arbitrary time horizons could be achieved by implementing a switching rule based on a running sum $S_t=\sum_{k=0}^t y_k$.
However, such extensions would need to reason carefully about switched nonlinear dynamics.

\begin{proof}[Proof Sketch]
The full proof appears in Appendix~\ref{app:main_res}.
Denote $q_\star$ as the recommendation that has positive inner produce with $p_0$.
Then bounding the expected regret involves several terms:
\[\E[R(T)] \leq \E[R_{q_\star}(T)] + \E[R_e(T)\mid q_e=q_\star] + \E[R_e(T)\mid q_e\neq q_\star] \]
The first is the regret of playing $q_\star$ at all time. Proposition~\ref{prop:fixed_regret} bounds this quantity.
Bounding the second and third requires reasoning about the probability that the exploration phase fails, so that $q\neq q_\star$.
The third term additionally requires us to reason about the different preference trajectories induced by playing $q_1$ vs. $q_2$ during the exploration phase.
Unlike arguments made for explore-then-commit algorithms in memoryless settings, we must account for the statefulness of preferences.
\end{proof}

\section{Stationary Preferences as A Design Goal}\label{sec:rand}

In the previous section, we considered the goal of recommending high-affinity items, and showed under our preference dynamics model this is a trivially easy task. We further show that though it may not lead to polarization, it can lead to a form of ``mode collapse'' where individuals' preferences converge to the closest item in  $Q$.
All this points to affinity maximization as an unsatisfying goal.

We therefore motivate a new definition of performance which focuses on \emph{non-manipulation} by rewarding \emph{stationary preferences}:
\[r_t^\stationary = p_t^\top p_0\:.\]
Notice that as in the previous setting, $q_t^\star = p_0$ would ensure that $r_t^\stationary=1$ for all $t$. 
However, it may be that of a finite set of items available for recommendation, none is exactly equal to $p_0$.
A natural choice would then be to select the closest item $q=\arg\max_{q\in Q}q^\top p_0$.
However, this fails to achieve the non-manipulation goal, as is made formal in the following proposition.

\begin{prop}
Under fixed recommendations $q_t=q\neq p_0$,
$R(T) \geq CT$ for a constant $C>0$ depending on $p_0^\top q$ in both the constant and decreasing step size settings.
\end{prop}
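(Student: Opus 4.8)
The plan is to reduce everything to the limiting behavior of the per-step reward $r_t^{\stationary} = p_t^\top p_0$ already pinned down by the previous results. Writing the regret out, $R(T) = \sum_{t=0}^{T-1}\left(1 - p_t^\top p_0\right)$, so it suffices to understand the sequence $a_t := 1 - p_t^\top p_0$. First I would invoke Corollary~\ref{lem:alpha_beta}, which gives $p_t = \alpha_t p_0 + \beta_t q$ in closed form; taking the inner product with the unit vector $p_0$ yields
\[p_t^\top p_0 = \alpha_t + \beta_t\,(p_0^\top q) = \frac{\gamma_t + (p_0^\top q)^2(1-\gamma_t)}{\sqrt{(p_0^\top q)^2 + \gamma_t^2\left(1-(p_0^\top q)^2\right)}}.\]
Equivalently, Proposition~\ref{prop:convergence} states that $p_t \to \sgn(p_0^\top q)\,q$, which already determines the limit used below.

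Next I would take $t \to \infty$. Since $\gamma_t \to 0$ in both step-size regimes, the display collapses to $\lim_{t\to\infty} p_t^\top p_0 = (p_0^\top q)^2/\sqrt{(p_0^\top q)^2} = |p_0^\top q|$, consistent with $p_t$ converging to $\sgn(p_0^\top q)\,q$ so that $r_t^{\stationary}\to \sgn(p_0^\top q)(q^\top p_0) = |p_0^\top q|$. Because $p_0$ and $q$ are unit vectors, $|p_0^\top q| < 1$ precisely when $q$ is neither parallel nor antiparallel to $p_0$; under that condition the limiting per-step regret $L := 1 - |p_0^\top q|$ is strictly positive.

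To conclude I would apply a Cesàro argument: since $a_t \to L$, the averages $\frac{1}{T}\sum_{t=0}^{T-1} a_t$ converge to the same limit $L$, hence $R(T)/T \to L$ and therefore $R(T) \geq \tfrac{L}{2}T = \tfrac{1}{2}\left(1 - |p_0^\top q|\right)T$ for all sufficiently large $T$. This yields the claimed constant $C = \tfrac12(1 - |p_0^\top q|)$, which depends only on $p_0^\top q$. One can make this fully explicit: from Corollary~\ref{lem:alpha_beta} the sequence $a_t$ is in fact monotonically increasing toward $L$ (the coefficient of $p_0$ shrinks and that of $q$ grows), so no Cesàro machinery is even needed once the limit is identified, and the exact $\gamma_t$ from Proposition~\ref{prop:convergence} quantifies how quickly $a_t$ reaches a fixed fraction of $L$.

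The main point requiring care is the degenerate direction rather than any hard estimate. The stated hypothesis $q \neq p_0$ does not by itself exclude $q = -p_0$, yet in that case $p_0^\top q = -1$, the first update gives $\tilde p_1 = (1+\eta_0)p_0$, so $p_t \equiv p_0$, the reward is identically $1$, and $R(T)\equiv 0$; the result genuinely needs $q \neq \pm p_0$, i.e. $|p_0^\top q| < 1$, which is exactly the condition making $C>0$. A secondary caveat is that the bound must be read asymptotically: the $t=0$ summand is $1 - p_0^\top p_0 = 0$, so $R(1) = 0$ and no inequality $R(T) \geq CT$ with $C>0$ can hold at $T=1$. The honest statement is $R(T) = \Omega(T)$, holding once $T$ exceeds a threshold governed by the convergence rate $\gamma_t$ (and hence by $\eta$ and $\timeoffset$).
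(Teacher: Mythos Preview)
Your proposal is correct and follows essentially the same route as the paper: both compute $p_t^\top p_0$ from Corollary~\ref{lem:alpha_beta} and exploit its monotonicity in $\gamma_t$ (the paper bounds by the value at $\gamma_1$ via Lemma~\ref{lem:inc_dec} rather than passing to the limit and invoking Ces\`aro, but the content is the same, and you yourself note the monotonicity shortcut). Your caveats about the degenerate case $q=-p_0$ and the failure of the inequality at $T=1$ are well taken and apply equally to the paper's own statement and proof.
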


\begin{proof}
Using the expression from Corollary~\ref{lem:alpha_beta},
\[p_0^\top p_t =  \frac{{\gamma_t} + (q^\top p_0)^2(1-{\gamma_t})}{\sqrt{(q^\top p_0)^2+\gamma_t^2(1-(q^\top p_0)^2)}}  \:.\]
This is a decreasing function in $\gamma_t$ by Lemma~\ref{lem:inc_dec}; hence for $t\geq 1$ it can be upper bounded by $\gamma_1$.
Letting $0<C_1<1$ be the right hand side of the expression at $\gamma_t=\gamma_1$, we lower bound the regret by
\[R(T) =\sum_{t=0}^{T-1} 1 - p_0^\top p_t \geq \sum_{t=1}^{T-1} 1-C_1 \geq \frac{1-C_1}{2}T \:.\]
\end{proof}

While in principle a time-varying recommendation policy could be computed on any finite horizon using dynamic programming, in practice the expressions quickly become unwieldy due to maximizing a nonlinear expression over the discrete set $Q$.
Furthermore, it is not clear how to relax the dependence of such a policy on the time horizon.
In the next sections, we show that a randomized policy can do much better in (approximately) leaving preferences unchanged.

\begin{figure}
    \centering
    \includegraphics[width=0.49\textwidth]{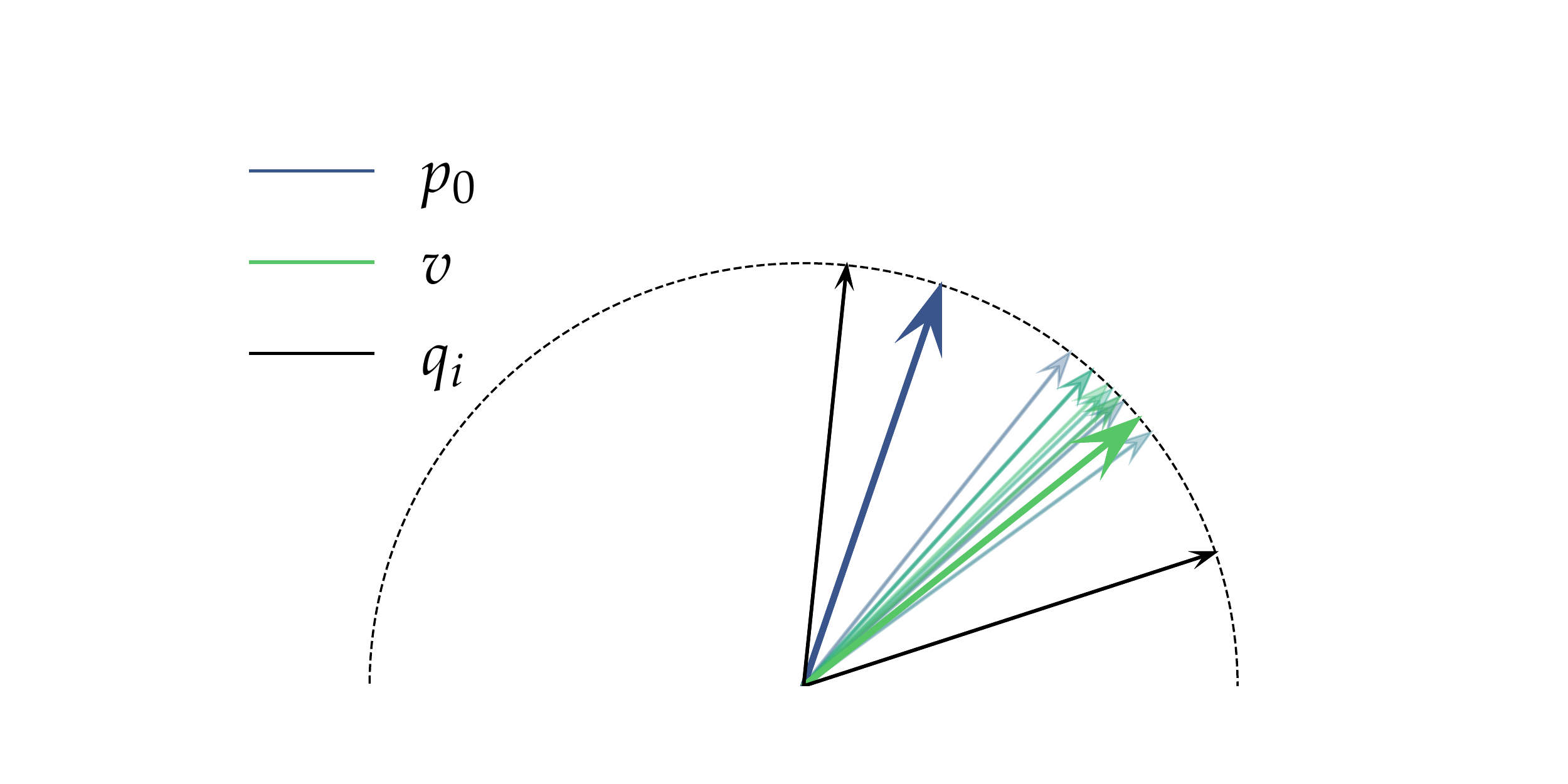}~\includegraphics[width=0.49\textwidth]{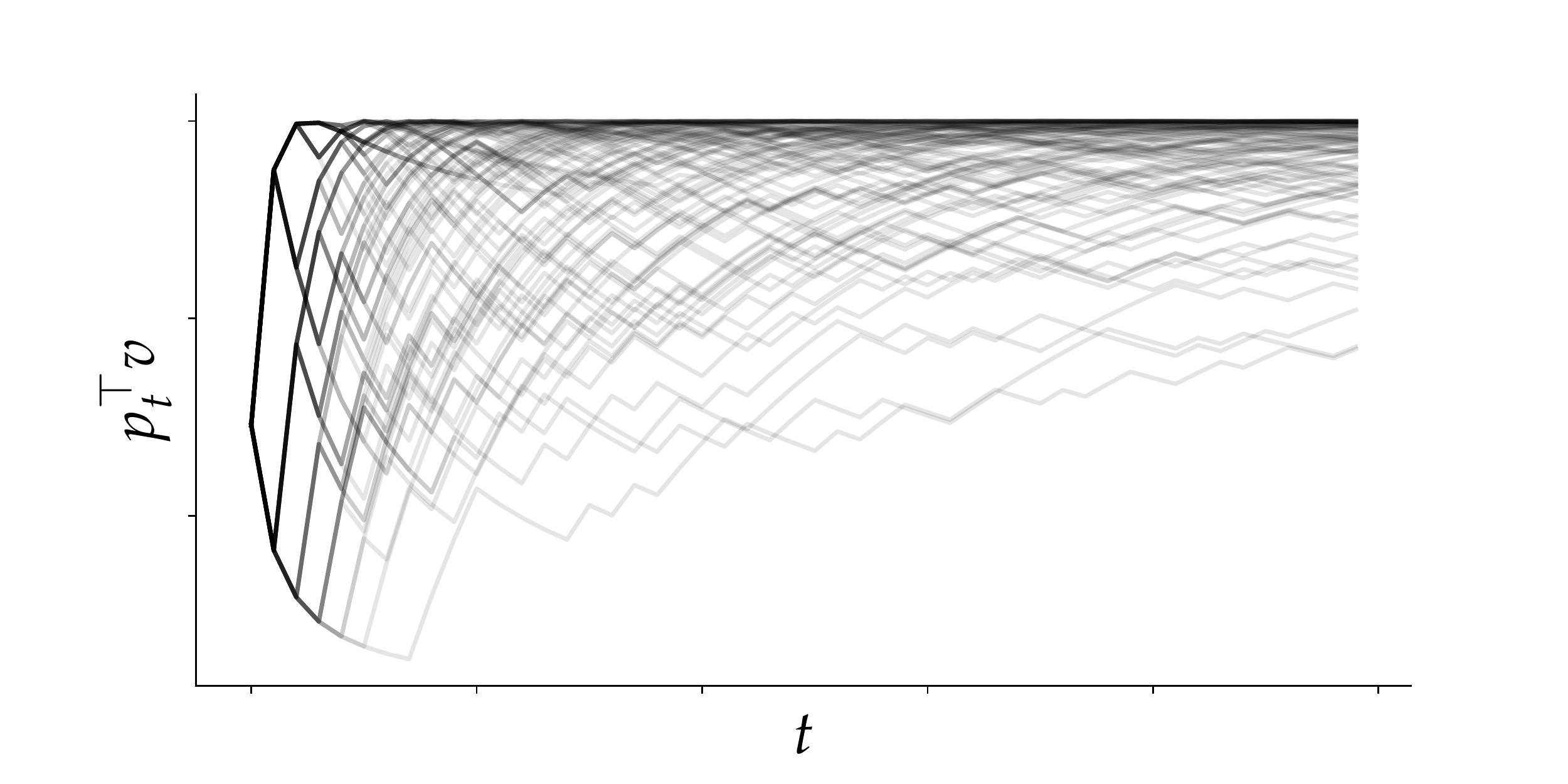}
    \caption{Left: an illustration of preference evolution from $p_0$ (blue) towards $v$ (green) under randomized recommendation over $q_1$ and $q_2$. Right: a plot of the inner product betwen $p_t$ and $v$ over time for 100 trials.}

    \label{fig:rand}
\end{figure}

\subsection{Randomized Recommendations}

Motivated by the stationary preferences objective, in this section we develop a method for guaranteeing that preference $p_t$ converges to arbitrary $v_1\in\mathcal S^{d-1}$.
We make use of randomization over recommendations of items from $Q$.
Our method is inspired by the algorithmic approaches used in streaming PCA, specifically Oja's method, whose iterates evolve analogously to the preference model we study~\cite{de2015global,jain2016streaming}.

Our algorithm maintains a probability distribution over the discrete set $Q$ and draws items independently from that distribution. 
In this section, we present an analysis which characterizes the preference trajectories under such a scheme.
We focus on the decreasing step size setting, as the following remark argues that constant step size  won't converge under randomization.

\begin{remark}
When the step size $\eta_t$ is constant, the preference $p_t$ will not converge under randomized recommendations. 
This is straightforward to see by considering the effect of any particular $q_t$ which will cause $p_{t+1}$ to move $\eta q_t^\top p_t$ towards it.
However, for small enough step size, it is possible to argue that $p_k$ will be close to $v_1$ for some $k\leq t$ with probability scaling like $1-\mathcal O(1/t)$;
similar arguments appear in the context of streaming algorithms~\cite{de2015global}.
\end{remark}

We consider a distribution over $N$ items defined by the probability weighting $\alpha\in\R_+^N$ with $\sum_{i=1}^N \alpha_i=1$. At each timestep $t$, $q_t$ is selected to be $q_i\in Q$ with probability $\alpha_i$.
Under this stochastic rule, the expected dynamics are defined by the covariance matrix
\[\E[\tilde p_{t+1}] = (I + \eta_t \E[q q^\top])p_t\:. \]
Let us refer to this matrix as $\Sigma = \E[q q^\top] = \sum_{i=1}^N \alpha_i q_i q_i^\top$. 
Let $v_i$ be the eigenvector of $\Sigma$ associated with $\lambda_i$ for $\lambda_1\geq \lambda_2\geq \dots \geq \lambda_N$.
Then the eigen-decomposition allows us to write
\begin{align}\label{eq:definev}
    \sum_{i=1}^N \alpha_iq_iq_i^\top=\Sigma = V\Lambda V^\top,\quad V = \begin{bmatrix} v_1 & V_{2:}\end{bmatrix}
\end{align}
where $\Lambda=\diag(\lambda_1,\dots,\lambda_N)$, $V$ has $v_i$ in column $i$, and we use $V_{2:}$ to denote the matrix containing the non-dominant eigenvectors.
Define the following measure of spread
\begin{align}\label{eq:spread}
    M=\max\left\{\max_{j\in[N]}\|q_jq_j^\top - \Sigma\|, \lambda_1\right\}\:.
\end{align}

We present a result which shows that under this randomization scheme, the preference $p_t$ converges to the dominant eigenvector of the covariance matrix.
It follows by a straightforward application of Theorem 12 from~\citet{huang2021streaming}. We provide a proof in Appendix~\ref{app:main_rand_res}.
\begin{coro}\label{prop:random_conv}(Theorem 12 from~\citet{huang2021streaming})
Consider preference dynamics with step size $\eta_t=\frac{\eta}{t+\timeoffset}$ and randomized recommendations with $q_t$ chosen from $Q$ at each $t$ according to probabilities $\alpha$.
Define $v_1$ as in~\eqref{eq:definev} and assume that $ v_1^\top p_0\geq \sqrt{2}/2$.
Further assume that
\[\eta\geq \frac{8}{\lambda_1-\lambda_2}\quad \text{and}\quad s\geq 1+2C^2\eta^2M^2\log\left(2CMT\eta/\delta\right),\]
where $C$ is a fixed constant less than $175$, $M$ is defined in~\eqref{eq:spread}, and $T\geq 0$.
Then with probability  $1-\delta$, for all $t\leq T$,
$$1-(p_t^\top v_1)^2\leq 4e^2 \frac{s+2}{s+1+t}\:.$$
\end{coro}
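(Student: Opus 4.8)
The plan is to recognize the preference dynamics as an instance of Oja's iteration for streaming PCA and then invoke the cited convergence theorem essentially verbatim. The starting point is the algebraic identity $p_t^\top q_t \cdot q_t = q_t q_t^\top p_t$, which rewrites the unnormalized update as
\[\tilde p_{t+1} = (I + \eta_t\, q_t q_t^\top)\, p_t.\]
Since each $q_t$ is drawn independently from $Q$ according to $\alpha$, the rank-one matrices $q_t q_t^\top$ are i.i.d.\ with mean $\E[q_t q_t^\top] = \Sigma$, whose dominant eigenvector is exactly the target $v_1$ from~\eqref{eq:definev}. This is precisely the stochastic setting analyzed by Oja's method, with step-size schedule $\eta_t=\eta/(t+\timeoffset)$.

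A useful preliminary observation is that the per-step renormalization $p_t = \tilde p_t/\|\tilde p_t\|_2$ does not affect the trajectory of \emph{directions}. Defining the unnormalized iterate $w_t = \prod_{k=0}^{t-1}(I + \eta_k q_k q_k^\top)\, p_0$, one checks by induction that $p_t$ and $w_t$ are positive scalar multiples of one another, so that $p_t^\top v_1 = w_t^\top v_1 / \|w_t\|_2$. Hence the quantity $1-(p_t^\top v_1)^2$ that we wish to control is the squared sine of the angle between $v_1$ and the unnormalized Oja iterate $w_t$, which is exactly the error measure bounded in Theorem 12 of~\citet{huang2021streaming}. This lets us dispense with the normalization entirely and reduce to the standard analysis.

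It then remains to check that the hypotheses of that theorem are met by our assumptions. The initialization condition $v_1^\top p_0 \geq \sqrt{2}/2$ supplies the warm start $1-(v_1^\top p_0)^2 \leq 1/2$ required for the angle to contract. The eigengap enters through $\eta \geq 8/(\lambda_1-\lambda_2)$, ensuring the step size is large enough to separate $v_1$ from the remaining eigendirections $V_{2:}$, while the spread $M$ of~\eqref{eq:spread} plays the role of the uniform bound on the fluctuations $q_j q_j^\top - \Sigma$ together with the top eigenvalue $\lambda_1$. The lower bound on the offset $\timeoffset$ controls the burn-in so that the high-probability guarantee holds uniformly over all $t\leq T$ with failure probability $\delta$. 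Substituting the schedule $\eta_t = \eta/(t+\timeoffset)$ into the theorem's conclusion yields the stated rate $1-(p_t^\top v_1)^2 \leq 4e^2\,\frac{\timeoffset+2}{\timeoffset+1+t}$.

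The main obstacle is purely one of translation: Theorem 12 is stated in the notation and normalization conventions of the streaming-PCA literature, so the substance of the work is verifying that our definitions of the eigengap, the spread $M$, the step-size schedule, and the error measure $1-(p_t^\top v_1)^2$ coincide exactly with theirs, and that their constants (the universal $C<175$ and the $4e^2$ prefactor) carry over without modification. Once this dictionary is established, no new analysis of the dynamics is required.
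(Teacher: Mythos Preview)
Your proposal is correct and takes essentially the same approach as the paper: identify the preference dynamics with Oja's iteration and invoke Theorem~12 of \citet{huang2021streaming} after a notational translation. The paper's proof differs only cosmetically, explicitly tracking the tangent-type quantity $\|V_{2:}^\top p_t(v_1^\top p_t)^{-1}\|$ (which is what the cited theorem actually bounds, rather than the sine) as an upper bound on $1-(p_t^\top v_1)^2$, checking that $v_1^\top p_0\geq\sqrt{2}/2$ yields the required warm start $\|V_{2:}^\top p_0(v_1^\top p_0)^{-1}\|\leq 1$, and making the union bound over $t\leq T$ explicit.
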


This result shows high probability convergence of $p_t$ to $v_1$, the dominant eigenvector of $\Sigma$.
It is the consequence of a straightforward translation of
matrix concentration results developed by~\citet{huang2021streaming} in the context of a streaming PCA algorithm.
A few remarks on the assumptions: to some extent, the lower bounds required on $v_1^\top p_0$, $\eta$, and $s$ could be relaxed at the expense of simplicity in the final expression.
Smaller $\eta$ leads to a slower convergence rate, with a term proportional to $\lambda_1-\lambda_2$ appearing in the exponent.
Furthermore, lower bounds on $v_1^\top p_0$ and $s$ could be effectively achieved with an initialization routine and re-indexing of time. 
The routine would simply apply some fixed recommendation $q_v$ with $v_1^\top q_v> \sqrt{2}/2$ for a finite number of steps before switching to randomized recommendations.

Figure~\ref{fig:rand} shows an example illustrating the high probability convergence.
Trivially, if $\alpha$ puts all mass on a particular $q_i$, then $v_1=q_i$ and convergence also follows from our fixed recommendation analysis in Proposition~\ref{prop:convergence}.
However, since we are free to choose from many possible probability weightings, this strategy allows for a much rich set for $p_t$ to converge to.
Randomization thus provides a potential solution to the problem of mode collapse.
The following section explores conditions on $Q$ that guarantee when a particular $v$ can be set as the dominant eigenvector for an appropriate choice of $\alpha$.

We conclude this section by connecting the convergence result to the performance in terms of the stationary preference objective defined by $r_t^\stationary $.
This result requires that the sign of the inner product $p_t^\top v_1$ remains positive, which motivates the following definition.

\begin{definition}
We say that a set of vectors $u_1,\dots, u_k$ is \emph{self-aligned} with respect to $v$ if $u_i^\top u_j\geq 0$ for all pairs $i,j$ and $u_i^\top v\geq 0$ for all $i$. 
\end{definition}

\begin{coro}\label{coro:rand_regret}
Consider the setting and assumptions of Corollary~\ref{prop:random_conv} and
further assume that support of the randomization is restricted to a set of items which is self-aligned with respect to $v_1$ and $p_0$.
Then, the regret of this recommendation approach is bounded
with probability $1-\delta$,  
\[R(T) \leq 4e^2 (s+2)\left(1+\log\left(\tfrac{T+s}{s}\right)\right) + T\|v_1-p_0\|    \]
\end{coro}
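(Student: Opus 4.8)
The plan is to bound the per-round stationarity regret $1 - p_t^\top p_0$ by splitting it into a term that measures the gap between $p_t$ and the dominant eigenvector $v_1$ (controlled by Corollary~\ref{prop:random_conv}) and a fixed term measuring how far $v_1$ sits from $p_0$. Recalling that $r_t^\stationary = p_t^\top p_0$ and $r_\star = 1$, I would add and subtract $p_t^\top v_1$ to write
\[1 - p_t^\top p_0 = (1 - p_t^\top v_1) + p_t^\top(v_1 - p_0).\]
The second summand is at most $\|v_1 - p_0\|$ by Cauchy--Schwarz, since $\|p_t\|=1$; summed over the horizon this produces the $T\|v_1 - p_0\|$ term. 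The first summand is where the convergence guarantee must enter, but Corollary~\ref{prop:random_conv} controls $1-(p_t^\top v_1)^2$ rather than $1 - p_t^\top v_1$, so the two must be reconciled via the factorization $1 - p_t^\top v_1 = \frac{1-(p_t^\top v_1)^2}{1 + p_t^\top v_1}$, which gives $1 - p_t^\top v_1 \leq 1 - (p_t^\top v_1)^2$ provided $p_t^\top v_1 \geq 0$.

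The key step, and the main obstacle, is therefore to establish that $p_t^\top v_1 \geq 0$ for every $t$, which is exactly where the self-alignment hypothesis is used. Interpreting self-alignment with respect to $v_1$ and $p_0$ as the requirement that the support items satisfy $q_i^\top q_j \geq 0$, $q_i^\top v_1 \geq 0$, and $q_i^\top p_0 \geq 0$, I would prove by induction on $t$ the joint invariant that $p_t^\top q_i \geq 0$ for every support item $q_i$ and $p_t^\top v_1 \geq 0$. The base case holds since $p_0^\top q_i \geq 0$ by self-alignment and $p_0^\top v_1 \geq \sqrt{2}/2 > 0$ by assumption. For the inductive step, writing the update for the drawn item $q_t$ as $\tilde p_{t+1} = p_t + \eta_t (p_t^\top q_t) q_t$ yields
\[\tilde p_{t+1}^\top q_i = p_t^\top q_i + \eta_t (p_t^\top q_t)(q_t^\top q_i), \qquad \tilde p_{t+1}^\top v_1 = p_t^\top v_1 + \eta_t (p_t^\top q_t)(q_t^\top v_1).\]
In each expression every factor is nonnegative: $\eta_t > 0$, the inductive hypotheses give $p_t^\top q_i, p_t^\top q_t, p_t^\top v_1 \geq 0$, and self-alignment gives $q_t^\top q_i, q_t^\top v_1 \geq 0$. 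Hence both quantities remain nonnegative, and normalization preserves sign, closing the induction.

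With the sign established, I would combine the two bounds by summing over $t = 0,\dots,T-1$ and invoking Corollary~\ref{prop:random_conv} (which holds simultaneously for all $t \leq T$ with probability $1-\delta$):
\[R(T) \leq \sum_{t=0}^{T-1}\left(1 - (p_t^\top v_1)^2\right) + T\|v_1 - p_0\| \leq 4e^2(s+2)\sum_{t=0}^{T-1}\frac{1}{s+1+t} + T\|v_1 - p_0\|.\]
The remaining work is routine: I would bound the harmonic-type sum by pulling off the first term $\frac{1}{s+1} \leq 1$ and comparing the rest to $\int_0^{T-1}\frac{dx}{s+1+x} = \log\frac{s+T}{s+1} \leq \log\frac{T+s}{s}$, which gives $\sum_{t=0}^{T-1}\frac{1}{s+1+t} \leq 1 + \log\frac{T+s}{s}$ and hence the claimed bound. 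I expect the sign-preservation induction to be the only genuinely subtle ingredient; the decomposition, the Cauchy--Schwarz step, and the integral comparison are all mechanical.
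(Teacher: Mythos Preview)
Your proposal is correct and follows essentially the same route as the paper: the same add-and-subtract decomposition with a Cauchy--Schwarz bound on $p_t^\top(v_1-p_0)$, the same reduction of $1-p_t^\top v_1$ to $1-(p_t^\top v_1)^2$ via nonnegativity of $p_t^\top v_1$, the same two-stage induction (first $q_i^\top p_t\geq 0$, then $v_1^\top p_t\geq 0$) using self-alignment, and the same harmonic-sum estimate. If anything, your write-up is a bit more explicit about the decomposition step than the paper's.
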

Notice that when $v_1=p_0$ the bound is logarithmic in $T$.
Furthermore, an initialization routine with fixed recommendations (to ensure the lower bound on $s$) would contribute an amount to the regret scaling no more than logarithmically in $T$.

\begin{proof}
First notice that at any time $k$, $q_k^\top p_0>0$ with probability $1$ by the self-aligned assumption. 
Assume that the same holds for $p_{t-1}$. 
Then we show that for $p_t$,
\[q_k^\top \tilde p_{t} = q_k^\top  p_{t-1}+\eta_{t-1} q_k^\top q_{t-1} \cdot q_{t-1}^\top  p_{t-1}\]
Due to the self-aligned assumption, $q_k^\top q_{t-1}\geq 0$, and by inductive assumption the other terms are also positive.
Therefore, we have shown by induction that $q_k^\top p_t>0$ for all $k$ and $t$.

Next, notice that $v_1^\top p_0>0$ by assumption.
Assume the same holds for $p_{t-1}$.
Then for $p_t$,
\[v_1^\top \tilde p_{t} = v_1^\top  p_{t-1}+\eta_{t-1} v_1^\top q_{t-1} \cdot q_{t-1}^\top  p_{t-1}\]
Due to the self-aligned assumption, $v_1^\top q_{t-1}>0$, and the other terms are also positive by inductive assumption and the previous paragraph. 
Therefore, $p_t^\top v_1\geq 0$ and thus
$p_t^\top v_1 \geq (p_t^\top v_1 )^2$.

Therefore, we can bound the regret as
\begin{align*}
    R(T) %
    &\leq \sum_{t=0}^{T-1}1-(v_1^\top p_t)^2 + \|v_1-p_0\|
    \leq  4e^2 \sum_{t=0}^{T-1}\frac{s+2}{s+1+t} + T\|v_1-p_0\|
\end{align*}
where the final inequality uses the convergence result in Corollary~\ref{prop:random_conv}.
The result follows by bounding the harmonic series
\begin{align*}
    \sum_{t=0}^{T-1}\frac{1}{s+1+t} 
    = \sum_{t=1}^{T+s}\frac{1}{t} - \sum_{t=1}^{s}\frac{1}{t} \leq 1 + \log(T+s) - \log(s)\:.
\end{align*}
\end{proof}

\subsection{Designing Randomization}

The previous section develops a randomized approach to recommendation that allows preferences $p_t$ to converge to a richer set of directions $v\in\mathcal S^{d-1}$ than the individual items available for recommendation from the fixed set $Q$.
In this section, we precisely characterize how rich the set of possible $v\in\mathcal S^{d-1}$ is in terms of the item set $Q$.
In particular, we develop a method based on convex optimization for designing randomization schemes.

With a slight overloading of notation, we will use $Q$ to denote the ${\latentdim\times\numitems}$ matrix whose columns are given by $q_i$.
The following result presents convex sets which characterize the probability weightings $\alpha$ that ensure convergence to a given $v\in\mathcal S^{d-1}$.
Its proof is presented in Appendix~\ref{app:main_rand_res}.
\begin{prop}\label{prop:eigen}
For a given $v\in\mathcal S^{d-1}$,
define the sets
\begin{align*}
    \mathcal C_\mathrm{eig} &= \left\{x\in \R^\numitems\mid x\geq 0,\, Q\diag(Q^\top v) x = v\right\}\\
    \mathcal C_\mathrm{dom} &= \left\{x\in \mathcal C_\mathrm{eig}(v)\mid I - Q\mathrm{diag}(x)Q^\top \succeq 0\right\}
\end{align*}
For any $x\in\mathcal C_\mathrm{eig}$,
setting the weights as $\alpha_i = \frac{x_i}{ 1^\top x}$ ensures that $v$ is an eigenvector of $\E[qq^\top]=\Sigma$ with eigenvalue $(1^\top x)^{-1}$. If $x$ is also in $\mathcal C_\mathrm{dom}$, then it is additionally the dominant eigenvector.
Furthermore, $C_\mathrm{eig}$ characterizes all possible weightings $\alpha_i$ that ensure that $v$ is an eigenvector with nonzero eigenvalue.
\end{prop}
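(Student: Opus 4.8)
The plan is to reduce all the assertions to a single matrix identity for the action of $\Sigma$ on $v$, and then read off each claim. First I would rewrite the eigenvector equation in terms of the matrix $Q$. Writing $\Sigma = \sum_{i}\alpha_i q_i q_i^\top = Q\diag(\alpha)Q^\top$, a direct expansion gives
\[
\Sigma v = \sum_i \alpha_i (q_i^\top v)\, q_i = Q\diag(Q^\top v)\,\alpha,
\]
since $\diag(Q^\top v)\alpha$ is the vector with entries $(q_i^\top v)\alpha_i$ and left-multiplication by $Q$ sums these against the columns $q_i$. This identity is the engine for everything that follows.

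For the first claim I would substitute $\alpha = x/(1^\top x)$ for $x\in\mathcal C_\mathrm{eig}$. One first checks this is a valid distribution: $x\geq 0$ and $x\neq 0$ (since $Q\diag(Q^\top v)x = v\neq 0$ forces $x\neq 0$), so $1^\top x>0$ and the $\alpha_i$ are nonnegative and sum to one. The defining constraint $Q\diag(Q^\top v)x = v$ together with the identity above yields $\Sigma v = \tfrac{1}{1^\top x}Q\diag(Q^\top v)x = (1^\top x)^{-1}v$, so $v$ is an eigenvector with eigenvalue $\lambda = (1^\top x)^{-1}$. For the converse (the ``characterizes all weightings'' claim), suppose a distribution $\alpha$ makes $v$ an eigenvector with eigenvalue $\lambda\neq 0$; since $\Sigma\succeq 0$ we have $\lambda>0$. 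Setting $x = \alpha/\lambda$ gives $x\geq 0$ and, using the identity, $Q\diag(Q^\top v)x = \lambda^{-1}\Sigma v = v$, so $x\in\mathcal C_\mathrm{eig}$; moreover $1^\top x = 1/\lambda$, which recovers $\alpha = x/(1^\top x)$.

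For the dominance claim, the key step is to observe that $\Sigma = Q\diag(\alpha)Q^\top = (1^\top x)^{-1}Q\diag(x)Q^\top = \lambda\, Q\diag(x)Q^\top$. The extra constraint defining $\mathcal C_\mathrm{dom}$, namely $I - Q\diag(x)Q^\top\succeq 0$, says the top eigenvalue of $Q\diag(x)Q^\top$ is at most $1$; scaling by $\lambda>0$ then shows every eigenvalue of $\Sigma$ is at most $\lambda$. Since $v$ is already an eigenvector of $\Sigma$ with eigenvalue exactly $\lambda$, this $\lambda$ must be the largest eigenvalue $\lambda_1$, and hence $v$ is a dominant eigenvector.

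I expect no serious technical obstacle here---the whole argument is a change of variables $x\leftrightarrow\alpha$ plus one spectral comparison. The step requiring the most care is the dominance claim: one must correctly track the positive scaling factor $\lambda$ relating $\Sigma$ and $Q\diag(x)Q^\top$, so that the bound $Q\diag(x)Q^\top\preceq I$ translates to $\Sigma\preceq\lambda I$, and then note that $v$ attaining eigenvalue $\lambda$ forces $\lambda=\lambda_1$. A minor but worth-stating point is why $\lambda>0$ (ensuring the scaling preserves eigenvalue order and that $\alpha$ is well-defined), which follows from positive semidefiniteness of $\Sigma$ together with $x\neq 0$.
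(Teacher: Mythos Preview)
Your proposal is correct and follows essentially the same route as the paper: both hinge on the identity $\Sigma v = Q\diag(Q^\top v)\alpha$, the change of variables $x=\alpha/\lambda$ (equivalently $\alpha = x/1^\top x$, $\lambda=(1^\top x)^{-1}$), and the spectral equivalence $I-Q\diag(x)Q^\top\succeq 0 \iff \Sigma\preceq\lambda I$. Your write-up is in fact slightly more careful than the paper's in checking that $x\neq 0$, that $\lambda>0$, and that $\alpha$ is a valid distribution.
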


These sets are convex, and thus the feasibility of finding such an $x\in\mathcal C_{\mathrm{dom}}$ can be checked efficiently using convex solvers. 
However, equivalent conditions provide further geometric intuition about what properties are necessary of the set $Q$ of items available to recommend.
To state this condition, we define
\begin{align}\label{eq:Qv}
    \bar q_i =\sgn(v^\top q_i)\cdot q_i,\quad Q_v = \{\bar q_i\}_{i=1}^\numitems\:.
\end{align}
This set is constructed by
replacing the vectors in $Q$ with their negatives if they have negative inner product with $v$. It is therefore contained in the dual cone defined by $v$. 

\begin{prop}
Define ${Q}_v$ as in~\eqref{eq:Qv}. For any fixed $v\in\mathcal S^{d-1}$, 
\begin{enumerate}
    \item the set $\mathcal C_\mathrm{eig}$ is nonempty  if and only if  $v$ is in the conical hull of $Q_v$. That is, there exist weights $w_i\geq 0$ such that
    $v = \sum_{i=1}^N w_i \tilde q_i$.
    \item the set $\mathcal C_\mathrm{dom}$ is non-empty if and only if the weights $w_i$ satisfy, for nonzero weight indices $\mathcal I_+ = \{i\mid w_i>0\}$,
        $$\diag(\{|q_i^\top v|\}_{i\in\mathcal I_+}) \succeq \diag(\{w_i\}_{i\in\mathcal I_+}) Q_{\mathcal I_+}^\top Q_{\mathcal I_+}.$$
\end{enumerate}
\end{prop}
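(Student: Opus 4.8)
The plan is to prove both claims by translating the algebraic defining conditions of $\mathcal C_\mathrm{eig}$ and $\mathcal C_\mathrm{dom}$ into the conical-geometry language of $Q_v$ via an explicit nonnegative reweighting of the optimization variables, and then to reduce the positive-semidefinite dominance condition to the stated $|\mathcal I_+|\times|\mathcal I_+|$ inequality using a standard Gram/Sylvester-type identity. For claim (1), I would first expand the defining equation of $\mathcal C_\mathrm{eig}$ columnwise as $Q\diag(Q^\top v)x = \sum_i x_i (q_i^\top v) q_i$, so membership means $\sum_i x_i (q_i^\top v) q_i = v$ with $x\ge 0$. Using $\bar q_i = \sgn(v^\top q_i)\, q_i$ and $\sgn(v^\top q_i)(q_i^\top v)=|q_i^\top v|$, each term rewrites as $x_i(q_i^\top v)q_i = x_i|q_i^\top v|\,\bar q_i$, so introducing $w_i = x_i|q_i^\top v|\ge 0$ turns the constraint into $\sum_i w_i\bar q_i = v$, i.e. membership of $v$ in the conical hull of $Q_v$. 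For this to be an honest bijection I would handle separately the indices with $q_i^\top v = 0$: there $\bar q_i = 0$ (since $\sgn(0)=0$) and the $i$-th column of $Q\diag(Q^\top v)$ vanishes, so such indices are inert in both descriptions and can be set to $x_i=0$; on the remaining indices $x_i\mapsto w_i = x_i|q_i^\top v|$ is an invertible nonnegative reparametrization, giving claim (1) in both directions.

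For claim (2), recall $\mathcal C_\mathrm{dom} = \{x\in\mathcal C_\mathrm{eig}: I - Q\diag(x)Q^\top\succeq 0\}$, so I must show such an $x$ exists iff the weights admit a representation meeting the stated inequality. I would first observe that for any orthogonal index ($q_i^\top v = 0$) raising $x_i$ only adds $x_i q_iq_i^\top\succeq 0$ to $Q\diag(x)Q^\top$, which can only destroy the semidefinite condition; hence without loss the support of $x$ lies in $\mathcal I_+$, matching the support of $w$. Writing $B$ for the submatrix with columns $\{q_i\}_{i\in\mathcal I_+}$ and using $q_iq_i^\top = \bar q_i\bar q_i^\top$, one has $Q\diag(x)Q^\top = B\diag(x_{\mathcal I_+})B^\top$, and the substitution $x_i = w_i/|q_i^\top v|$ gives $\diag(x_{\mathcal I_+}) = \Lambda_w\Lambda_a^{-1}$, where $\Lambda_w=\diag(\{w_i\})$ and $\Lambda_a=\diag(\{|q_i^\top v|\})$ over $\mathcal I_+$ (both invertible there, since $w_i>0$ forces $q_i^\top v\ne 0$).

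The key step is then the identity $BDB^\top\preceq I \iff D^{1/2}B^\top B D^{1/2}\preceq I$, valid for any $B$ and PSD $D$ because both sides assert that the top singular value of $BD^{1/2}$ is at most one. Applying it with $D=\Lambda_w\Lambda_a^{-1}$, rewriting as $B^\top B\preceq D^{-1}=\Lambda_a\Lambda_w^{-1}$, and conjugating by $\Lambda_w^{1/2}$ yields $\Lambda_w^{1/2}(B^\top B)\Lambda_w^{1/2}\preceq \Lambda_a$, i.e. $\diag(\{|q_i^\top v|\})\succeq \Lambda_w^{1/2}Q_{\mathcal I_+}^\top Q_{\mathcal I_+}\Lambda_w^{1/2}$; since this symmetric matrix is similar to $\diag(\{w_i\})Q_{\mathcal I_+}^\top Q_{\mathcal I_+}$, it encodes exactly the stated condition (the two share identical generalized eigenvalues relative to $\Lambda_a$). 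Because the claim-(1) reparametrization makes each $x\in\mathcal C_\mathrm{eig}$ correspond to conical weights $w$, existence of an $x\in\mathcal C_\mathrm{dom}$ is equivalent to existence of conical weights $w$ for $v$ satisfying this inequality, which is the asserted ``iff.''

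The main obstacle I anticipate is not the conical-hull bookkeeping but getting the semidefinite reduction and its precise form exactly right: one must (a) argue cleanly that the $d\times d$ condition $I\succeq B\diag(x)B^\top$ is \emph{equivalent} to an $|\mathcal I_+|$-dimensional Gram condition rather than merely implied by it, (b) track invertibility of $\Lambda_w$ and $\Lambda_a$ on $\mathcal I_+$ so that every conjugation is reversible, and (c) reconcile the non-symmetric product $\Lambda_w Q_{\mathcal I_+}^\top Q_{\mathcal I_+}$ appearing in the statement with the symmetric form $\Lambda_w^{1/2}Q_{\mathcal I_+}^\top Q_{\mathcal I_+}\Lambda_w^{1/2}$ that the derivation naturally produces, noting that these define the same ordering because they are similar. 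Some care is also needed with the existential reading of ``the weights $w_i$'' whenever the conical representation of $v$ is non-unique.
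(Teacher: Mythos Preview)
Your proposal is correct and follows essentially the same route as the paper: the same nonnegative reparametrization $w_i = x_i|q_i^\top v|$ for claim (1), and the same reduction of the $d\times d$ condition $I\succeq Q_{\mathcal I_+}\diag(x)Q_{\mathcal I_+}^\top$ to the $|\mathcal I_+|\times|\mathcal I_+|$ condition $\diag(x)^{-1}\succeq Q_{\mathcal I_+}^\top Q_{\mathcal I_+}$ for claim (2). The only cosmetic difference is that the paper obtains this reduction via a Schur complement of the block matrix $\begin{bmatrix} I & Q_{\mathcal I_+}\\ Q_{\mathcal I_+}^\top & \diag(x)^{-1}\end{bmatrix}$ rather than your singular-value argument for $BD^{1/2}$; your treatment of the orthogonal indices and the symmetric-vs-nonsymmetric form of the final inequality is in fact more careful than the paper's.
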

The matrix on the right hand side of the final condition can be interpreted as a column-weighted covariance of item vectors. 
A sufficient condition for the second condition is that for all $i\in\mathcal I_+$,
    $$w_i \cdot \sum_{j\in\mathcal I_+} |q_i^\top q_j|\leq |q_i^\top v|.$$
    This follows from the Gershgorin circle theorem.
    The dominant eigenvector condition can therefore be understood as requiring that the positive weight on each $\bar q_i\in Q_v$ is not too large, relative to the magnitude of the angle between $q_i$ and $v$: in other words, we need to be able to achieve the conical weighting based largely on aligned items.
We remark that for the regret bound in Corollary~\ref{coro:rand_regret} to hold, it is furthermore necessary for the set of items with positive weight to be self-aligned with respect to $v$.
The problem of selecting a self-aligned subset of $Q$ can be expressed as a bilinear condition on $x$: $\mathcal C_\mathrm{conv} = \left\{x\in \mathcal C_\mathrm{dom}\mid x_i x_j q_i^\top q_j \geq 0, x_i q_i^\top v \geq 0\right\}$.
Bilinearity precludes directly incorporating this condition into a convex solver. 
However, subset selection heuristics (e.g. all $q_i$ such that $q_i^\top v$ is large) are likely to be successful for sufficiently rich sets of items $Q$.

\begin{proof}
Notice that the conditions of $\mathcal C_\mathrm{eig}$ are equivalent to $v$ being in the conical hull of the columns of $Q\diag(Q^\top v)$.
These columns are given by $\{q_i^\top v \cdot q_i\}_{i=1}^N$.
Therefore, $\mathcal C_\mathrm{eig}$ is nonempty if and only if
\[v=\sum_{i=1}^N x_i q_i^\top v \cdot q_i=\sum_{i=1}^N \underbrace{x_i |q_i^\top v|}_{w_i} \sgn(q_i^\top v) q_i\]
for some $x\geq 0$. This is exactly the conical hull of $Q_v$.

Turning to the final condition in $\mathcal C_\mathrm{dom}$, we have by a Schur complement argument that
\begin{align*}
    I - Q\mathrm{diag}(x)Q^\top \succeq 0 &\iff I - Q_{\mathcal I_+}\mathrm{diag}(x)_{\mathcal I_+}Q_{\mathcal I_+}^\top \succeq 0
    \\ &\iff \begin{bmatrix} I & Q_{\mathcal I_+}\\ 
Q_{\mathcal I_+}^\top& \mathrm{diag}(x)_{\mathcal I_+}^{-1} \end{bmatrix} \succeq 0 \iff \mathrm{diag}(x)_{\mathcal I_+}^{-1} - Q^\top Q \succeq 0
\end{align*} 
Replacing $x_i$ with $w_i/|q_i^\top v|$ as above, the result follows.
\end{proof}

Now that we have developed an understanding of when it is feasible to set the dominant eigenvector to some $v$,
we turn to designing the probability weights such that the eigengap $\lambda_1-\lambda_2$ is maximized.
First, recall that the sum of the top $k$ eigenvalues of a matrix $A$ can be written as a convex function, in the form of a semi-definite program
\[f_k(A) = \sum_{i=1}^k \lambda_i(A) =  \min_{s,t,Z} ~~ t\quad \text{s.t.}\quad t-ks - \mathrm{tr}(Z)\geq 0,~~ Z\succeq 0,~~ Z - A + sI \succeq 0\:.\]
Then notice that the eigengap of a matrix can be written as
$\lambda_1(A)-\lambda_2(A) = 2\lambda_1(A)-f_2(A)$.
Therefore, we can formulate an optimization problem for finding probability weights which set $v$ as the dominant eigenvector and maximize the eigengap.
Consider the following concave semi-definite program:
\begin{align}
\begin{split}\label{eq:SDP}
    \max_{x}~~& \frac{1}{1^\top x}(2-f_2(Q\mathrm{diag}(x)Q^\top))\\
    \text{s.t.}~~&x \geq 0,~~ Q \mathrm{diag}(Q^\top v) x = v,~~ I - Q\mathrm{diag}(x)Q^\top \succeq 0
\end{split}
\end{align}
It is concave because the objective is linear fractional.
Under the idenfication that $\lambda_1=(1^\top x)^{-1}$ and $\alpha = x/(1^\top x)$, the objective maximizes the eigen-gap of $A=Q\diag(\alpha)Q^\top$.

\subsection{Initial Preference Identification}
The randomization methodology developed in the previous two sections can be used to design recommendations which keep preferences stationary by selecting $v=p_0$.
However, this assumes that $p_0$ is directly observed, which is often not the case.
We therefore outline conditions under which $p_0$ can be identified from observations of the form $y_t = p_t^\top q_t$.
We will make use of the following notation to write the dynamics: define transfer matrices $\Phi_0=I$ and $\Phi_{t+1} = (I+\eta_t q_t q_t^\top)\Phi_t$ so that $\tilde p_t=\Phi_t p_0$.

Given a sequence of recommendations $q_0,\dots,q_{T-1}$, define the function $F_T:\mathcal S^{d-1}\to \R^T$
which generates the sequence of observations
\[\begin{bmatrix}y_0\\\vdots \\ y_{T-1}\end{bmatrix} = \begin{bmatrix}q_0^\top p_0\\\vdots \\ q_{T-1}^\top p_{T-1} \end{bmatrix}= \begin{bmatrix}\frac{q_0^\top \Phi_0 p_0}{\|\Phi_0p_0\|}\\\vdots \\ \frac{ q_{T-1}^\top\Phi_{T-1} p_0}{\|\Phi_{T-1} p_0\|} \end{bmatrix}=: F_T(p_0) \]
Then the problem of identifying initial preferences is related to this invertibility of this nonlinear observation function.

\begin{prop}
If $q_0,\dots, q_{T-1}$ span $\R^d$, the function $F_T$ is locally invertible for any $p_0\in\mathcal S^{d-1}$.
\end{prop}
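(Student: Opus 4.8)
The plan is to invoke the inverse function theorem in its immersion form. Since $F_T$ maps the $(d-1)$-dimensional manifold $\mathcal S^{d-1}$ into $\R^T$, it suffices to show that the differential $DF_T(p_0)$ is injective on the tangent space $T_{p_0}\mathcal S^{d-1}=p_0^\perp$; this makes $F_T$ an immersion at $p_0$, hence a local homeomorphism onto its image admitting a smooth left inverse. Smoothness of $F_T$ is immediate: each $M_t := I+\eta_t q_tq_t^\top$ is positive definite (eigenvalues $1$ and $1+\eta_t$), so every $\Phi_t$ is invertible and $\|\Phi_t p_0\|>0$ throughout, keeping each coordinate $y_t=q_t^\top p_t$ a smooth function of $p_0$.

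To compute the differential I would propagate the normalized recursion $p_{t+1}=M_tp_t/\|M_tp_t\|$. Writing $D_t=\partial p_t/\partial p_0$ with $D_0=I$, the chain rule gives
\[D_{t+1}=\tfrac{1}{\|M_tp_t\|}\bigl(I-p_{t+1}p_{t+1}^\top\bigr)M_tD_t,\]
and the $t$-th row of $DF_T(p_0)$ is $\partial y_t/\partial p_0=q_t^\top D_t$. Suppose now that $h\in p_0^\perp$ lies in the kernel, i.e. $q_t^\top D_t h=0$ for every $t$, and set $h_t:=D_th$; since $p_t$ has unit norm, $p_t^\top D_t=0$, so each $h_t$ is tangent at $p_t$, i.e. $p_t^\top h_t=0$.

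The crux is the following observation, which I expect to be the main obstacle to verify carefully. If $q_t^\top h_t=0$, then $M_th_t=h_t$; moreover $p_{t+1}\propto p_t+\eta_t(q_t^\top p_t)q_t$, so $h_t^\top p_{t+1}\propto (h_t^\top p_t)+\eta_t(q_t^\top p_t)(h_t^\top q_t)=0$, whence the projection $I-p_{t+1}p_{t+1}^\top$ also fixes $h_t$. Consequently $h_{t+1}=h_t/\|M_tp_t\|$ is a positive multiple of $h_t$. By induction $h_t=c_th$ with $c_t>0$, so along a kernel direction the linearized dynamics act by a pure positive rescaling—the nonlinear normalization neither rotates nor kills $h_t$. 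The kernel condition $q_t^\top h_t=0$ therefore collapses from the seemingly coupled system $\{q_t^\top D_t h=0\}$ to the decoupled conditions $q_t^\top h=0$ for every $t=0,\dots,T-1$. Since the $q_t$ span $\R^d$, this forces $h=0$.

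Thus the differential is injective on $p_0^\perp$ and local invertibility follows. Beyond the key lemma, the one point worth stating precisely is the sense of ``locally invertible'': because $T\ge d>d-1$, $F_T$ is an immersion (a local embedding with a smooth left inverse on a neighborhood of $p_0$) rather than a diffeomorphism onto an open subset of $\R^T$.
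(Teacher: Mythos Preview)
Your proof is correct and follows essentially the same approach as the paper: both linearize the observation map, take a tangent vector $h\in p_0^\perp$ in the kernel, and show by induction that the propagated vector stays a positive scalar multiple of $h$ (the paper phrases this as the perpendicular component $v_t^\perp=(I-p_tp_t^\top)\Phi_t v$ being constant), so that the kernel conditions reduce to $q_t^\top h=0$ for all $t$, forcing $h=0$ by the spanning assumption. The only differences are bookkeeping---you propagate the normalized recursion $D_t$ whereas the paper uses the unnormalized $\Phi_t$---and your explicit remark that ``locally invertible'' here means an immersion with a smooth local left inverse, which is a useful clarification the paper leaves implicit.
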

\begin{proof}
A differentiable function on a manifold is locally invertible around a point $p_0$ if
the \emph{differential} evaluated at $p_0$ is invertible~\cite[Theorem 4.16]{boumal2020introduction}.
The differential of a function  at $p_0$ is invertible
if the nullspace of the (Euclidean) Jacobian is perpendicular to the tangent of the manifold in which $p_0$ resides.
Translating for the case of the unit sphere, we check whether the nullspace of the Jacobian is parallel to $p_0$.
The Jacobian of $F_T$ is given by
\[J = \begin{bmatrix}\nabla_{p_0} y_0^\top \\ \vdots \\ \nabla_{p_0} y_{T-1}^\top\end{bmatrix},\quad\text{where}\quad\nabla y_t = \frac{1}{\|\Phi_t p_0\|} \left(\frac{\Phi_t^\top \Phi_t p_0p_0^\top}{\|\Phi_t p_0\|^2}-I \right)\Phi_t^\top q_t\]

Consider an arbitrary vector $v\in\R^d$ such that $Jv = 0$. In other words $\nabla_{p_0} y_t ^\top v = 0$ for all $t$.
Simplifying that expression,
\begin{align*}
    \nabla_{p_0} y_t ^\top v & = \frac{1}{\|\Phi_t p_0\|} q_t^\top \Phi_t \left(\frac{p_0p_0^\top\Phi_t^\top \Phi_tv }{\|\Phi_t p_0\|^2}-v \right)
    = \frac{1}{\|\Phi_t p_0\|} q_t^\top  \left(p_t p_t^\top v_t - v_t\right)
\end{align*}
where we define $v_t = \Phi_t v$ and recall that $p_t = \frac{\Phi_t p_0}{\|\Phi_t p_0\|}$.
Therefore, we have derived that
\[\nabla_{p_0} y_t v = 0 \iff q_t^\top  \left(p_t p_t^\top v_t - v_t\right) = 0 \]
Define  $v^\perp = (I- p_0 p_0^\top)v$ and consider the following claim:
\[\nabla_{p_0} y_t^\top v=0~~\forall~t\iff q_t^\top v_\perp = 0~~\forall~t.\]
If this claim is true, then 
$v_\perp=0$ since $q_1,\dots,q_{T-1}$ span $\mathbb R^d$ by assumption. As a result, $v \propto p_0$, meaning that the nullspace of $J$ is parallel to $p_0$ and thus $F_T$ is locally invertible.
None of the argument relied on anything in particular about $p_0$, so local invertibility is implied for any $p_0$.

It remains only to prove the claim, which we do by induction.
The base case of $t=0$:
\[q_0^\top  \left(p_0 p_0^\top v - v\right) = 0 \iff q_0^\top v^\perp = 0 \]
follows because $v = p_0 p_0^\top v + v^\perp$.

Define $v^\perp_t = (I- p_t p_t^\top)v_t$ which is perpendicular to $q_t$. 
Then a straightforward calculation confirms  that $q_t^\top v^\perp_t = 0$ implies $v^\perp_{t+1}=v^\perp_t$, using the fact that $p_t^\top v^\perp_t=0$.
Then by induction $v^\perp_t=v^\perp$ for all $t$, which proves the claim.
\end{proof}

Ensuring that the observation function is locally invertible allows for the possibility of estimating $p_0$ from affinity observations.
A natural approach for estimation would be an iterative method, for example Newton's method for manifolds~\citep{dedieu2003newton}.
Analyzing the convergence of such an approach, especially in the presence of noisy observations, would allow for performance guarantees for an explore-then-commit style algorithm based on the randomized method presented in previous sections.
We leave the design and analysis of such an estimation algorithm to future work.

\section{Conclusion and Discussion}
In this work, we investigate the concept of preference drift 
in personalization systems, where a user's preferences become more aligned with content they see and like, and more anti-aligned with content they see and dislike. We analyze the dynamics of such preferences when a content recommendation policy aims to maximize reward. We show that any algorithm which eventually shows users content within their same hemisphere will enjoy vanishing average regret. These dynamics reward policies which learn very little about user preferences, and instead change those preferences to be more aligned with the content the platform shows them. 
In scenarios where manipulation of user preferences  is not preferable or acceptable, policies which do not change user preferences by very much could have considerable appeal. We show how to construct such ``stationary'' policies when the set of content available is sufficiently rich. 
We further investigate when it is possible to learn enough about a user's preferences in a partial observation setting to employ this approach.

We now reflect on connections and distinctions between this work and that on polarization of preferences in the absence of personalization recently explored by ~\citet{hkazla2019geometric} and~\citet{gaitonde2021polarization}.
On the technical side, our work develops finite-time statements of convergence and performance, while the aforementioned works  show polarization as $t\to\infty$. Many approaches, including ours, would imply finite horizon (approximate) polarization under non-personalized recommendations. 
For a fixed recommended item $q$, our result in Proposition~\ref{prop:fixed_regret} implies that all individuals will polarize around either $q$ or $-q$.
For randomized recommendations such that $\E[qq^\top]=\Sigma$ has a dominant eigenvector $v_1$ and decreasing step size, our result in Proposition~\ref{prop:eigen} immediately implies finite-time convergence to $\pm v_1$.
Deriving finite-time statements of polarization for the uniformly random recommendations studied in the aforementioned works would require further analysis, for example by a stopping time argument.
We further comment that in the decreasing step size setting $\eta_t\propto \frac{1}{t+1}$, it is not clear whether the strong form of polarization studied by ~\citet{hkazla2019geometric} and~\citet{gaitonde2021polarization} holds.

More broadly, our conclusions on the effects of biased assimilation is distinct from previous work.
Personalized recommendations decouple the polarization dynamic induced by biased assimilation.
Rather than the convergence of populations towards poles, our analysis demonstrates the convergence of individuals towards the content they consume.
This brings to bear questions about the set of available content and the objectives of recommendation systems.
Our results demonstrate that the class of approximately optimal recommendation policies is vastly different when considering affinity maximization vs. stationary preferences.
Interestingly, the randomization technique we use in the context of stationary preferences resonates with techniques for ensuring exploration or diversity among recommended items. 

We conclude with additional ideas for future work. 
There are many interesting technical questions which arise from this dynamics model.
One immediate extension to this work is to develop an algorithm for achieving stationary preferences in the noisy partial observation setting, which can be viewed as a nonlinear state estimation problem.
Another interesting direction would be to consider questions of representation. What if both item vectors $q$ and user vectors $p$ must be learned from data or approximated in lower dimensions? This would align with practice in the recommendation systems community.
Finally, there are many interesting ways to extend the dynamics model itself. 
For example, rather than purely a response to external stimuli, preferences may evolve as a result of internal processes even in the absence of recommendations.
How would this change optimal recommendation strategies?
And if they are not known a priori, how might such dynamics be estimated from data?
Another extension to the model could decouple the observable behaviors from true preferences and affinities.
This would effectively relax a ``revealed preference'' assumption, taking a cue from work on inconsistent preferences~\cite{strotz1955myopia,kleinberg2022challenge}.

\subsection*{Acknowledgements}
This work was supported by an NSF CAREER award (ID 2045402),  An NSF AI Center Award (The Institute for Foundations of Machine Learning), and the Simons Collaboration on the Theory of Algorithmic Fairness.

\bibliographystyle{plainnat}
\bibliography{main}
\newpage
\appendix

\section{Proofs of results in Section~\ref{sec:conv}}\label{app:main_res}

\subsection{Preference Trajectory for Corollary~\ref{lem:alpha_beta}}
\begin{proof}[Proof of Corollary~\ref{lem:alpha_beta}]
Without loss of generality, assume that $q^\top p_0>0$. Otherwise, the same analysis holds replacing $q$ with $-q$.
Then by Proposition~\ref{prop:convergence}, $q^\top p_t>0$ for all $t$ and $p_t$  is a positive weighted combination of $p_0$ and $q$.
We can therefore write
\[p_t = \alpha p_0 + \beta q\]
We have that
\[q^\top p_t = \alpha q^\top p_0 + \beta  ,\quad \|p_t\|_2^2 = \alpha^2 + \beta^2 + 2\alpha\beta p_0^\top q = 1\:.\]
Combining the two equations yields
\[1=\alpha^2 + (q^\top p_t - \alpha q^\top p_0)^2 + 2\alpha (q^\top p_t - \alpha q^\top p_0) p_0^\top q = (1-(q^\top p_0)^2)\alpha^2 + (q^\top p_t)^2\]
Therefore,
\[\alpha^2 = \frac{1-(q^\top p_t)^2}{1-(q^\top p_0)^2} = \frac{1-\frac{1}{1+\gamma_t^2\frac{1-(q^\top p_0)^2}{(q^\top p_0)^2}}}{1-(q^\top p_0)^2} = \frac{\gamma_t^2}{(q^\top p_0)^2+\gamma_t^2(1-(q^\top p_0)^2)} \]
Plugging in this expression,
\begin{align*}\beta &= q^\top p_t - \frac{\gamma_t}{\sqrt{(q^\top p_0)^2+\gamma_t^2(1-(q^\top p_0)^2)}} q^\top p_0\\
&= 
 \frac{q^\top p_0}{\sqrt{(q^\top p_0)^2+\gamma_t^2 (1-(q^\top p_0)^2 ) }} - \frac{\gamma_t q^\top p_0}{\sqrt{(q^\top p_0)^2+\gamma_t^2(1-(q^\top p_0)^2)}} 
 \end{align*}
 Finally, we apply Lemma~\ref{lem:inc_dec} with $x=\gamma_t$ and $a=q^\top p_0$ to notice that $\alpha$ increases with $\gamma_t$ while $\beta$ decreases. 
Since $\gamma_t$ is decreasing in $t$, this means that $\alpha$ decreases with $t$ and $\beta$ increases.
\end{proof}

\subsection{Regret Argument for Proposition~\ref{prop:exp_commit}}
\begin{proof}[Proof of Proposition~\ref{prop:exp_commit}]
We make the argument in the case that $q_1^\top p_0>0$; reversing the roles of $q_1$ and $q_2$ will yield the same final bound.
Then we break the regret into two terms
\[R(T) \leq R_{q_1}(T) + R_e(T) \]
where $R_{q_1}$ is the regret incurred by playing $q_1$ for all $t$.
This can be bounded following Proposition~\ref{prop:fixed_regret}. $R_e(T)$ is the regret of the exploration algorithm compared with playing $q_1$ at all time.

Define $\bar p_t$ as the sequence of preference vectors under the policy $q_t=q_1$ for all $t$ and notice that $\bar p_t^\top q_1>0$ for all $t$ by sign invariance from Proposition~\ref{prop:convergence}.
Then the regret of Algorithm~\ref{alg:explor_commit} compared to playing $q_t=q_1$ is
\begin{align*}
    R_e(T) = \sum_{t=0}^{T-1} q_1^\top \bar p_t - q_t^\top p_t&= %
    \sum_{t=0}^{T_e-1} q_1^\top \bar p_t - q_e^\top p_t +  \sum_{t=T_e}^{T-1} q_1^\top \bar p_t - q^\top p_t\:.
\end{align*}

First consider the case that $q_e=q_1$. Then the first summation is equal to zero since $\bar p_t=p_t$ while $q_t=q_1$.
The second summation is also zero if we choose $q=q_1$. 
However, it is possible that at the end of the exploration phase we mistakenly switch to $q_2$ based on observations. 
It will be $q=q_2$ if $S<0$. Thus, 
\begin{align*}
    \Pr[q=q_2]%
    &= \Pr\left[\sum_{t=0}^{T_e-1} q_1^\top p_t < -\sum_{t=0}^{T_e-1} w_t\right]\leq e^{-(\sum_{t=0}^{T_e-1} q_1^\top p_t)^2/T_e\sigma^2} \leq e^{-(p_0^\top q_1)^2T_e/\sigma^2}
\end{align*}
where we use that the noise is sub-Gaussian and that $q_1^\top p_t$ is lower bounded by $q_1^\top p_0$ ( Proposition~\ref{prop:convergence}).

Putting it together for the case that $q_e=q_1$,
\[\E[R_e(T) |q_e=q_1] = \sum_{t=T_e}^{T-1} \Pr\left[q=q_2\right](q_1^\top \bar p_t - q_2^\top p_t) \leq  (T-T_e)2e^{-(p_0^\top q_1)^2T_e/\sigma^2}\]
where we make use of the fact that inner products are bounded by 1 for unit norm vectors.

Now we turn to the case that $q_e = q_2$ and break the regret into three terms
\[\E[R_e(T) |q_e=q_2] = \sum_{t=0}^{T_e-1} q_1^\top \bar p_t - q_2^\top p_t  +\sum_{t=T_e}^T \Pr[q=q_2](q_1^\top \bar p_t - q_2^\top p_t) + \Pr[q=q_1]q_1^\top (\bar p_t -  p_t)\]
The first can be bounded by $2T_e$
using that the vectors are unit norm.
For the second summation, a similar argument to the $q_e=q_1$ case shows that the probability of failure is bounded by $e^{-(p_0^\top q_2)^2T_e/\sigma^2}$.
The third term remains: even when failure does not occur, there is suboptimality due to the fact that $q_e\neq q_1$ results in differing trajectories $\bar p_t\neq p_t$.

Notice that even when $q_e=q_2$, it is still the case that $q_1^\top p_t >0$ for all $t$.
In other words, application of $q_2$ preserves the sign of the inner product $q_1^\top p_t$:
\[q_1^\top p_+ \propto q_1^\top p + \eta q_1^\top q_2 p^\top q_2> 0\]
where we use the assumption that $q_1^\top q_2<0$.
Therefore, we can bound the difference $q_1^\top\bar p_t -  q_1^\top p_t$ using Lemma~\ref{lem:convergence} and considering a time offset of $T_e$. 
\begin{align*}
    q_1^\top\bar p_t - q_1^\top p_t &\leq   \gamma_{t-T_e} \frac{ q_1^\top (\bar p_{T_e} - p_{T_e})}{(\bar p_{T_e}^\top q_1)^2(p_{T_e}^\top q_1)^2}\:. %
\end{align*}
We can lower bound $\bar p_{T_e}^\top q_1$ by $p_0^\top q_1$ using Proposition~\ref{prop:convergence}.
To lower bound $p_{T_e}^\top q_1$, we first use Corollary~\ref{lem:alpha_beta} to write that
\[p_{T_e}^\top q_1 \geq \min_{\alpha,\beta} \quad \alpha q_1^\top p_0 - \beta q_1^\top q_2\quad \text{s.t.}\quad \alpha^2+\beta^2+2\alpha\beta p_0^\top q_2 = 1\]
Lemma~\ref{lem:ellipse_constrained_opt} shows that the minimum of this expression is $\min\{q_1^\top p_0, -q_1^\top q_2\}$.

This allows us to bound the expression

Finally, recalling that the sum of $\gamma_t$ is bounded by ,
\begin{align*}
    \E[R_e(T) |q_e=q_2]
    &\leq 2T_e + (T-T_e)2e^{-(p_0^\top q_2)^2T_e/\sigma^2}+  \frac{\sum_{t=T_e}^T  \gamma_{t-T_e} q_1^\top (\bar p_{T_e} - p_{T_e})}{(p_{0}^\top q_1)^2\min\{q_1^\top p_0, -q_1^\top q_2\}^2}
\end{align*}
The numerator will be bounded by $C_\gamma$ as defined in Proposition~\ref{prop:fixed_regret}.

Putting it all together with $a=\min\{|p_0^\top q_1|, |p_0^\top q_2|, |q_1^\top q_2|\}$ 
\begin{align*}
    \E[R_e(T)]  &\leq \frac{1}{2} \E[R_e(T) |q_e=q_1] + \frac{1}{2} \E[R_e(T) |q_e=q_2]\\
    &\leq T_e + 2(T-T_e) e^{-a^2T_e/\sigma^2} + \frac{C_\gamma}{2a^4}
\end{align*}
Setting $T_e=\sigma^2 \log(T)/a^2$ results in 
\[\E[R_e(T)] \leq \sigma^2 \log(T)/a^2 + 2 + \frac{C_\gamma}{2a^4}\]
Then the final result holds using Proposition~\ref{prop:fixed_regret} to bound $R_{q_1}(T) $ along with the observation that since $a<1$,
\[\frac{1}{2a^4} + \frac{1}{a^{2}}- 1 \leq \frac{1}{a^4}\:.\]
\end{proof}

\section{Proofs of results in Section~\ref{sec:rand}}\label{app:main_rand_res}

\subsection{Randomized Convergence for Corollary~\ref{prop:random_conv}}
\begin{proof}[Proof of Corollary~\ref{prop:random_conv}]

First, notice that we can bound
\[1-(p_t^\top v_1 )^2 = p_t^\top(I- v_1v_1^\top) p_t = \|V_{2:}^\top p_t\|^2 \leq \|V_{2:}^\top p_t(v_1^\top p_t)^{-1}\|^2\:.\]
Then, by the fundamental theorem of linear algebra,
\[1=\|p_0\|^2 = (p_0^\top v_1)^2 + \|V_{:2}^\top p_0\|^2\:.\]
Therefore, 
\[v_1^\top p_0 \geq \sqrt{2}/2 \implies \|V_{:2}^\top p_0(v_1^\top p_0)^{-1}\|\leq 1 \:.\]

Therefore, we can apply Theorem 12 from~\citet{huang2021streaming} with the identification that step size parameters are
$\eta(\lambda_1-\lambda_2)$ and $s+1$.
Union bounding their result, we have that for $0\leq t\leq T-1$,
$$1-(p_t^\top v_1 )^2\leq \|V_{2:}^\top p_t(v_1^\top p_t)^{-1}\| \leq 2e\sqrt{\frac{s+2}{s+1+t}}\:.$$

\end{proof}

\subsection{Feasibility in Proposition~\ref{prop:eigen}}
\begin{proof}[Proof of Proposition~\ref{prop:eigen}]
First consider
$$\tilde {\mathcal C} = \left\{\alpha\in\R^N, \lambda\in\R \mid \alpha\geq 0, \sum_{i=1}^N \alpha_i = 1, Q\diag(Q^\top v)\alpha = \lambda v\right\}.$$
This set exactly describes all possible weightings and eigenvalues because
\[\E[qq^\top] v = \lambda v \iff Q (Q^\top v \circ \alpha) = \lambda v\:,\]
and the first two constraints are required so that $\alpha$ is a valid probability weighting.

We now show that $\mathcal C_\mathrm{eig}$ and $\tilde {\mathcal C}$ are equivalent.
First, notice that $x\in\mathcal C_\mathrm{eig}$ implies that the pair $(x/1^\top x,(1^\top x)^{-1})\in\tilde{\mathcal C}$.
Second, note that if $\alpha,\lambda \in\tilde{\mathcal C}$, then $\alpha/\lambda \in \mathcal C_\mathrm{eig}$, noting that $\lambda>0$ by assumption.

Finally, for $\mathcal C_{\mathrm{dom}}$ it suffices to use the identification $x=\alpha/\lambda$ and note that
\[I - Q\diag(x)Q^\top \succeq 0  \iff \lambda I  \succeq Q\diag(\alpha)Q^\top \iff \lambda \geq \lambda_{N}(Q\diag(\alpha)Q^\top)\]
Since the previous argument showed is that $\lambda$ is an eigenvector of $Q\diag(\alpha)Q^\top$, it must be that it is identically the maximum eigenvector $\lambda = \lambda_{N}$.
\end{proof}

\section{Lemmas}
\subsection{Lemmas about preference dynamics}

\begin{lemma}\label{lem:convergence}
Consider the dynamics of fixed recommendations $q_t=q$ for different initial preferences $p_0$ and $\bar p_0$ such that $q^\top p_0 \cdot q^\top \bar p_0 > 0$.
Then we have that
\[q^\top (\bar p_t - p_t) \leq \gamma_t^2 \frac{ q^\top (\bar  p_0 - p_0)}{(\bar p_0^\top q)^2(p_0^\top q)^2}\]
\end{lemma}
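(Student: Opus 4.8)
The plan is to reduce everything to the scalar quantities $r_t = q^\top p_t$ and $\bar r_t = q^\top \bar p_t$, each of which is governed exactly by Proposition~\ref{prop:convergence}. First I would use the sign hypothesis $q^\top p_0\cdot q^\top\bar p_0>0$: both sequences retain a common, nonzero sign for all $t$ (sign invariance from Proposition~\ref{prop:convergence}), and replacing $q$ by $-q$ flips the left-hand side $q^\top(\bar p_t-p_t)$ and the numerator $q^\top(\bar p_0-p_0)$ simultaneously while leaving $\gamma_t$ and the squared factors untouched. Hence I may assume $r_0,\bar r_0>0$, and in the orientation relevant to the application, $\bar r_0\ge r_0$.

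Next I would apply Proposition~\ref{prop:convergence} to each trajectory separately, obtaining $r_t^{-2}-1=\gamma_t^2(r_0^{-2}-1)$ and $\bar r_t^{-2}-1=\gamma_t^2(\bar r_0^{-2}-1)$. Subtracting cancels the constant and gives the clean identity
\[\bar r_t^{-2}-r_t^{-2}=\gamma_t^2\left(\bar r_0^{-2}-r_0^{-2}\right).\]
Writing each side as a difference of squares over the product, $\bar r^{-2}-r^{-2}=\tfrac{(r-\bar r)(r+\bar r)}{r^2\bar r^2}$, and solving for $\bar r_t-r_t$ yields
\[\bar r_t-r_t=\gamma_t^2\,\frac{\bar r_0-r_0}{r_0^2\bar r_0^2}\cdot g_t,\qquad g_t:=(r_0+\bar r_0)\frac{r_t^2\bar r_t^2}{r_t+\bar r_t}.\]
This factorization makes manifest that $\bar r_t-r_t$ and $\bar r_0-r_0$ share a sign, since every other factor is positive.

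The remaining step, where the main (though elementary) work lies, is to show $g_t\le 1$. I would invoke monotonicity: by Proposition~\ref{prop:convergence} both $r_t$ and $\bar r_t$ increase towards $1$, so $r_0+\bar r_0\le r_t+\bar r_t$; this cancels the denominator and leaves $g_t\le r_t^2\bar r_t^2\le 1$ because $r_t,\bar r_t\le 1$. Alternatively, without monotonicity, $r_0+\bar r_0\le 2$ together with $\tfrac{r_t^2\bar r_t^2}{r_t+\bar r_t}\le\tfrac{r_t\bar r_t}{r_t+\bar r_t}\le\tfrac12$ (the latter since $r_t+\bar r_t-2r_t\bar r_t=r_t(1-\bar r_t)+\bar r_t(1-r_t)\ge 0$) gives the same bound.

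Finally, since $\bar r_0-r_0\ge 0$ in the relevant orientation and $g_t\le 1$, the displayed identity immediately yields $\bar r_t-r_t\le \gamma_t^2\tfrac{\bar r_0-r_0}{r_0^2\bar r_0^2}$, which is exactly the claim after substituting back $r_t=q^\top p_t$, $\bar r_t=q^\top\bar p_t$. I expect the only points requiring care to be the sign bookkeeping (the WLOG reduction and the shared sign, both handled by the factorization) and the verification $g_t\le 1$; the rest is a direct consequence of the exact trajectory formula of Proposition~\ref{prop:convergence}.
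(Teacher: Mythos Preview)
Your proof is correct and follows essentially the same approach as the paper: both reduce to the explicit formula of Proposition~\ref{prop:convergence}, perform the same WLOG sign reduction, and ultimately bound the difference using the two inequalities $r_t^2\bar r_t^2\le 1$ and $r_0+\bar r_0\le r_t+\bar r_t$. Your packaging via the exact identity $\bar r_t-r_t=\gamma_t^2\,\tfrac{\bar r_0-r_0}{r_0^2\bar r_0^2}\,g_t$ is arguably tidier than the paper's conjugate manipulation of $(1+\gamma_t^2 c_0)^{-1/2}$, but the content is the same.
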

\begin{proof}
Assume without loss of generality that $q$ has positive inner product with $\bar p_0$ and $p_0$ and that $\bar p_0^\top q> p_0^\top q$. (Otherwise, the proof holds by replacing $q$ with $-q$ and switching the roles of $p_0$ and $\bar p_0$.)
Applying Proposition~\ref{prop:convergence},
\[p_t^\top q= \left(1+ \gamma_t^2  c_0 \right)^{-1/2}, \quad \bar p_t^\top q= \left(1+ \gamma_t^2  \bar c_0 \right)^{-1/2}\]
where we define  $c_0 = \left(( p_0^\top q)^{-2}- 1\right)$ and similar for $\bar c_0$, noting that $\bar c_0<c_0$ and thus $p_t^\top q <\bar p_t^\top q$.
The difference can be written as
\[\bar p_t^\top q -  p_t^\top q= \left(1+ \gamma_t^2  \bar c_0 \right)^{-1/2} - \left(1+ \gamma_t^2   c_0 \right)^{-1/2} = \frac{\left(1+ \gamma_t^2 \bar c_0 \right)^{-1} - \left(1+ \gamma_t^2  c_0 \right)^{-1}}{\left(1+ \gamma_t^2 c_0 \right)^{-1/2} +\left(1+ \gamma_t^2 \bar c_0 \right)^{-1/2}}\:.\]
We first lower bound the denominator using the fact that $\gamma_t$ is decreasing in $t$ and $\gamma_0=1$.
\[\left(1+ \gamma_t^2 c_0 \right)^{-1/2} +\left(1+ \gamma_t^2 \bar c_0 \right)^{-1/2} \geq \left(1+ c_0 \right)^{-1/2} +\left(1+  \bar c_0 \right)^{-1/2} = p_0^\top q + \bar p_0^\top q\]
Next, we upper bound the numerator using the fact that $\gamma_t\geq 0$.
\[\frac{1}{1+ \gamma_t^2 \bar c_0 } - \frac{1}{1+ \gamma_t^2  c_0 } =\frac{\gamma_t^2 ( c_0 -  \bar c_0)}{(1+ \gamma_t^2 c_0 )(1+ \gamma_t^2 \bar c_0 )} \leq \gamma_t^2 ((  p_0^\top q)^{-2} -  ( \bar p_0^\top q)^{-2})\]
Combining these expressions, 
\[\bar p_t^\top q -  p_t^\top q \leq \gamma_t^2\frac{(  p_0^\top q)^{-2} -  ( \bar p_0^\top q)^{-2}}{p_0^\top q + \bar p_0^\top q} = \gamma_t^2\frac{ \bar p_0^\top q -  p_0^\top q}{(\bar p_0^\top q)^2(p_0^\top q)^2}\:.\]
\end{proof}

\subsection{Technical Lemmas}

\begin{lemma}\label{lem:inc_dec}
For $a\in(0,1)$, consider the functions
\[f_1(x) = \frac{x}{\sqrt{a+x^2(1-a)}}, \quad f_2(x) = \frac{1-x}{\sqrt{a+x^2(1-a)}}, \quad f_3(x) = \frac{x+a^2(1-x)}{\sqrt{a+x^2(1-a)}}\:.\]
Then for $x\geq 0$, $f_1$ and $f_3$ are increasing in $x$ while $f_2$ is decreasing.
\end{lemma}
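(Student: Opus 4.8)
The plan is to compute derivatives of each $f_i$ with respect to $x$ and determine their signs on $x \geq 0$, exploiting the common denominator $g(x) = \sqrt{a + x^2(1-a)}$. Writing $g(x)^2 = a + x^2(1-a)$, I have $g'(x) = \frac{x(1-a)}{g(x)}$, so $g g' = x(1-a)$. The key simplification is that for a quotient $f = h/g$, the derivative $f' = (h'g - h g')/g^2$ has the same sign as $h'g^2 - hg g' = h' g^2 - h\, x(1-a)$, which avoids square roots entirely and reduces each claim to checking the sign of a polynomial in $x$ and $a$.

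For $f_1$ with $h(x) = x$, the numerator-sign expression becomes $g^2 - x \cdot x(1-a) = a + x^2(1-a) - x^2(1-a) = a > 0$, so $f_1$ is increasing. For $f_2$ with $h(x) = 1-x$, I would compute $-g^2 - (1-x)x(1-a) = -(a + x^2(1-a)) - (x - x^2)(1-a) = -a - x(1-a)$, which is strictly negative for $x \geq 0$ and $a \in (0,1)$, giving that $f_2$ is decreasing. For $f_3$ with $h(x) = x + a^2(1-x) = a^2 + x(1-a^2)$, so $h'(x) = 1 - a^2$, the relevant expression is $(1-a^2)g^2 - (a^2 + x(1-a^2))x(1-a)$; I would expand this and verify it is positive, using $1 - a^2 = (1-a)(1+a)$ to factor out $(1-a)$ and reduce to showing $(1+a)(a + x^2(1-a)) - (a^2 + x - xa^2)x \geq 0$, which after collecting terms should yield something manifestly nonnegative (expected to be $a(1+a) - $ lower-order terms that cancel favorably, leaving a positive constant or linear remainder).

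The main obstacle I anticipate is purely the bookkeeping in the $f_3$ case: unlike $f_1$ and $f_2$, whose sign expressions collapse to a single term, $f_3$ produces a genuine polynomial in both $x$ and $a$ that must be shown nonnegative on the whole quadrant $x \geq 0$, $a \in (0,1)$. The risk is a sign error or an incomplete argument that the residual polynomial stays nonnegative. To guard against this I would factor out $(1-a)$ early, group the result as a quadratic in $x$, and either complete the square or check that its discriminant is nonpositive (or that both the constant term and leading coefficient are positive with no interior sign change). Throughout, the positivity of $g(x)^2$ for $a \in (0,1)$ ensures the denominator never vanishes, so the sign of $f_i'$ is governed entirely by these polynomial numerators and the monotonicity claims follow directly.
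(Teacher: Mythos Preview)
Your approach is exactly the paper's: compute each $f_i'$ and check its sign, and your computations for $f_1$ and $f_2$ are correct and match the paper's numerators $a$ and $-a-(1-a)x$ over $d(x)=(a+x^2(1-a))^{3/2}$.

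The gap is in the $f_3$ case, and it is not a bookkeeping risk but a genuine failure of the stated claim. Carrying out your own plan, with $h(x)=a^2+(1-a^2)x$ one gets
\[
h'(x)\,g(x)^2 - h(x)\,x(1-a)
= (1-a^2)\bigl(a+(1-a)x^2\bigr) - (1-a)x\bigl(a^2+(1-a^2)x\bigr)
= a(1-a)\bigl(1+a-ax\bigr).
\]
This is linear in $x$ and changes sign at $x=(1+a)/a>1$, so $f_3$ is increasing only on $[0,(1+a)/a]$ and strictly decreasing thereafter; the lemma's assertion ``for $x\geq 0$'' is false for $f_3$. The paper's proof records the numerator as $a(1-a^2)+a^2(1-a)x=a(1-a)(1+a+ax)$, which is a sign error on the last term. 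Your plan to ``complete the square or check the discriminant'' cannot succeed on all of $x\geq 0$, so do not expect the residual polynomial to be globally nonnegative. The lemma is only ever applied in the paper with $x=\gamma_t\in(0,1]$, and on that interval $1+a-ax\geq 1>0$, so the monotonicity of $f_3$ does hold where it is used; the fix is to restrict the hypothesis to $x\in[0,1]$ (or $x\leq (1+a)/a$) rather than $x\geq 0$.
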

\begin{proof}
The functions increase (or decrease) if and only if their derivatives are positive (or negative).
The derivatives are
\[f'_1(x) =  \frac{ a }{d(x)} > 0, 
\quad
f'_2(x) = \frac{
-a-(1-a) x 
}{d(x)}< 0,\quad
f'_3(x) = \frac{
 a (1-a^2)  + a^2(1-a) x
}{d(x)}> 0\:.
\]
where $d(x) =(a+x^2(1-a))^{3/2}$.
\end{proof}

\begin{lemma}\label{lem:ellipse_constrained_opt} For $a,b,c\in(0,1]$,
\[\min\{a,b\} = \min_{x,y\in\R_+} \quad ax+by \quad \text{s.t.}\quad x^2+y^2-2cxy = 1\]
\end{lemma}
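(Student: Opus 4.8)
The plan is to establish the two inequalities $\min_{x,y\in\R_+} ax+by \le \min\{a,b\}$ and $ax+by \ge \min\{a,b\}$ separately, where the second holds pointwise on the entire feasible set. Together with an explicit achieving point this pins the optimum to $\min\{a,b\}$.

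For the upper bound, I would simply exhibit feasible points attaining the target value. The corner $(x,y)=(1,0)$ satisfies the constraint $x^2+y^2-2cxy=1$ and lies in $\R_+^2$, giving objective value $a$; symmetrically $(0,1)$ is feasible and gives value $b$. Hence the minimum is at most $\min\{a,b\}$, and this bound is in fact attained rather than merely approached.

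For the matching lower bound, assume without loss of generality that $a\le b$ (otherwise swap the roles of $x$ and $y$, which also swaps $a$ and $b$). Since $y\ge 0$ and $b\ge a$, we have $ax+by\ge a(x+y)$, so it suffices to show $x+y\ge 1$ on the feasible set. This follows from the algebraic identity $(x+y)^2 = (x^2+y^2-2cxy)+2(1+c)xy = 1 + 2(1+c)xy$; since $x,y\ge 0$ and $c>0$, the correction term $2(1+c)xy$ is nonnegative, so $(x+y)^2\ge 1$ and hence $x+y\ge 1$. Combining gives $ax+by\ge a=\min\{a,b\}$ for every feasible point, matching the upper bound.

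I do not anticipate a substantive obstacle; the computation is elementary and the reduction $ax+by \ge a(x+y)$ is the only idea needed. The one point to be careful about is not to rely on the constraint defining a nondegenerate ellipse: when $c=1$ the constraint degenerates to the pair of lines $|x-y|=1$, and the feasible region is unbounded. Since the pointwise lower bound and the achieving corner $(1,0)$ both go through unchanged in that case, the claim holds uniformly over all $c\in(0,1]$, which is why I would phrase the lower bound as a pointwise inequality rather than invoking compactness or a Lagrangian stationarity argument.
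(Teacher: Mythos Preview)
Your proof is correct and complete. It differs genuinely from the paper's argument: the paper observes that the constraint is an arc of an ellipse in the positive quadrant, writes $ax+by=\|(a,b)\|_2\sqrt{1+2cxy}\cos\theta$, and argues that both the norm factor $\sqrt{1+2cxy}$ and the angular factor $\cos\theta$ are smallest at the endpoints $(1,0)$ and $(0,1)$, so the minimum must be attained there. Your route instead establishes the pointwise inequality $ax+by\ge\min\{a,b\}$ directly via the identity $(x+y)^2=1+2(1+c)xy$ and the monotone comparison $ax+by\ge\min\{a,b\}(x+y)$.

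Your approach is more elementary and, as you note, more robust: it needs no appeal to compactness or to the existence of a minimizer, and it handles the degenerate case $c=1$ (where the feasible set is two unbounded rays) without modification. The paper's geometric factorization is a nice picture but implicitly uses attainment on a compact arc, which strictly speaking fails at $c=1$; your pointwise bound sidesteps that entirely.
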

\begin{proof}
The constraint set of this optimization is the arc of an ellipse contained within the positive quadrant.
Therefore, the minimum must occur at a boundary (where $x=0$ or $y=0$) or at a critical point in the interior.
If the minimum occurs at a boundary, it will be $\min\{a,b\}$.

We now argue that the minimum cannot occur in the interior. 
Notice that
\[ax+by = \left\|\begin{bmatrix}a\\b\end{bmatrix}\right\|_2  \left\|\begin{bmatrix}x\\y\end{bmatrix}\right\|_2 \cos \theta =\left\|\begin{bmatrix}a\\b\end{bmatrix}\right\|_2 \sqrt{1+2cxy} \cos\theta \]
where $\theta$ is the angle between $[a,b]$ and $[x,y]$ and the equality follows by the ellipse constraint.
Both terms depending on $x,y$ attain their minimum on the boundary: this is where the norm is smallest and the angle is largest.
\end{proof}
 \end{document}